\documentclass[english]{article}

\usepackage{algorithm}
\usepackage{amsfonts}
\usepackage{amsmath}
\usepackage{amsthm}
\usepackage{authblk}
\usepackage[colorlinks,linkcolor=blue,anchorcolor=blue,citecolor=blue,urlcolor=blue]{hyperref}
\usepackage[noend]{algpseudocode}
\usepackage{cleveref}
\usepackage{geometry}
\geometry{verbose,tmargin=3cm,bmargin=3cm,lmargin=3cm,rmargin=3cm}
\usepackage{graphicx}
\usepackage[utf8]{inputenc}
\usepackage{mathtools}
\usepackage{subcaption}
\usepackage{thmtools}
\usepackage{thm-restate}
\usepackage{tikz-cd}
\graphicspath{ {./figures/} }


\usepackage{aktnotes}

\makeatletter
\newcounter{algorithmicH}
\let\oldalgorithmic\algorithmic
\renewcommand{\algorithmic}{%
  \stepcounter{algorithmicH}
  \oldalgorithmic}
\renewcommand{\theHALG@line}{ALG@line.\thealgorithmicH.\arabic{ALG@line}}
\makeatother


\begin{document}

\title{Pareto-optimal clustering \\ with the primal deterministic information bottleneck}
    
\author[1, 2]{Andrew K. Tan}
\author[1, 2]{Max Tegmark}
\author[1, 2, 3]{Isaac L. Chuang}

\affil[1]{\footnotesize Department of Physics, Massachusetts Institute of Technology, Cambridge, Massachusetts 02139, USA}
\affil[2]{\footnotesize The NSF AI Institute for Artificial Intelligence and Fundamental Interactions, Cambridge, Massachusetts 02139, USA}
\affil[3]{\footnotesize Research Laboratory of Electronics, Massachusetts Institute of Technology, Cambridge, Massachusetts 02139, USA}

\date{}

\maketitle


\begin{abstract}
    At the heart of both lossy compression and clustering is a trade-off between the fidelity and size of the learned representation.
    Our goal is to map out and study the Pareto frontier that quantifies this trade-off.
    We focus on the optimization of the Deterministic Information Bottleneck (DIB) objective over the space of hard clusterings.
    To this end, we introduce the {\it primal} DIB problem, which we show results in a much richer frontier than its previously studied Lagrangian relaxation when optimized over discrete search spaces.
    We present an algorithm for mapping out the Pareto frontier of the primal DIB trade-off that is also applicable to other two-objective clustering problems.
    We study general properties of the Pareto frontier, and we give both analytic and numerical evidence for logarithmic sparsity of the frontier in general.
    We provide evidence that our algorithm has polynomial scaling despite the super-exponential search space,
    and additionally, we propose a modification to the algorithm that can be used where sampling noise is expected to be significant.
    Finally, we use our algorithm to map the DIB frontier of three different tasks: compressing the English alphabet, extracting informative color classes from natural images, and compressing a group theory-inspired dataset, revealing interesting features of frontier, and demonstrating how the structure of the frontier can be used for model selection with a focus on points previously hidden by the cloak of the convex~hull.
\end{abstract}


\section{Introduction}
    \label{sec:introduction}
    Many important machine learning tasks can be cast as an optimization of two objectives that are fundamentally in conflict: performance and parsimony.
    In an auto-encoder, this trade-off is between the fidelity of the reconstruction and narrowness of the bottleneck.
    In the rate-distortion setting, the quantities of interest are the distortion, as quantified by a prescribed distortion function, and the captured information.
    For clustering, the trade-off is between intra-cluster variation and the number of clusters.
    While these problems come in many flavors---with different motivations, domains, objectives, and solutions---what is common to all such multi-objective trade-offs is the existence of a Pareto frontier, representing the boundary separating feasible solutions from infeasible ones.
    In a two-objective optimization problem, this boundary is generically a one-dimensional curve in the objective plane, representing solutions to the trade-off where increasing performance along one axis necessarily decreases performance along the other.

    The shape of the frontier, at least locally, is important for model selection: prominent corners on the frontier are often more robust to changes in the inputs and therefore correspond to more desirable solutions.
    The global frontier can provide additional insights, such as giving a sense of interesting scales in the objective function.
    Structure often exists at multiple scales;
    for hierarchical clustering problems, these are the scales at which the data naturally resolve.
    Unfortunately, much of this useful structure (see Figure~\ref{fig:titular}) is inaccessible to optimizers of the more commonly studied convex relaxations of the trade-offs.
    Optimization over discrete search spaces poses a particular difficulty to convex relaxed formulations, as most points on the convex hull are not feasible solutions, and Pareto optimal solutions are masked by the hull.
    While the optimization of the Lagrangian relaxation is often sufficient for finding a point on or near the frontier, we, in contrast, seek to map out the entire frontier of the trade-off and therefore choose to tackle the primal problem directly.
    \begin{figure}[H]
        \centering
        \includegraphics[width=13.5cm]{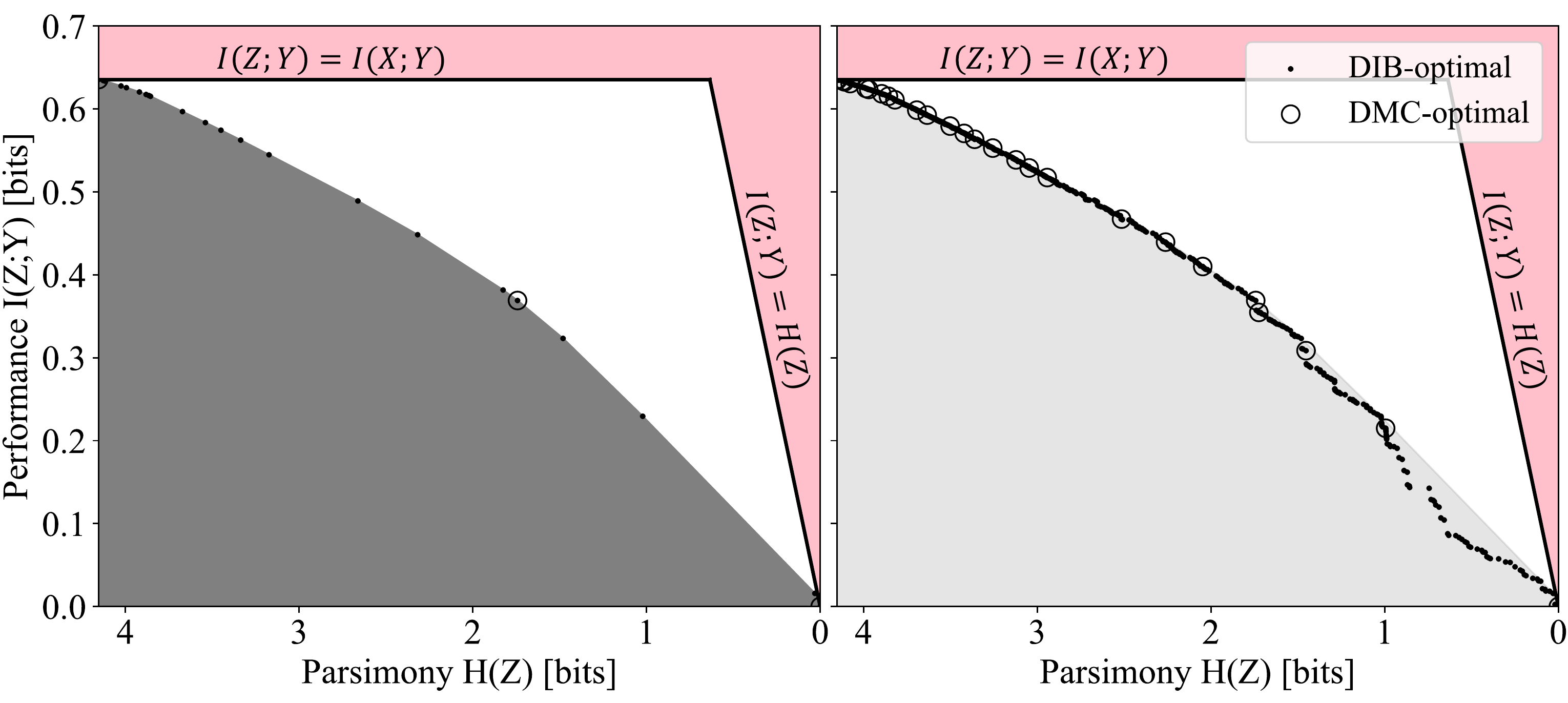}
        \caption{Comparison of the Lagrangian DIB (\textbf{left}) and primal DIB (\textbf{right}) frontiers discovered by Algorithm~\ref{alg:mapper} for the English alphabet compression task discussed in Section~\ref{sec:alphabet-dataset}.
        The shaded regions indicate the convex hull of the points found by the Pareto Mapper algorithm.
        Clusterings inside the shaded region, while Pareto optimal, are not optimal in the Lagrangian formulation.
        }
        \label{fig:titular}
    \end{figure}
    
    \begin{algorithm}[H]
        \caption{Pareto Mapper: \(\varepsilon\)-greedy agglomerative search}
        \label{alg:mapper}
        \textit{Input}: Joint distribution \(X, Y \sim p_{XY}\), and search parameter $\varepsilon$
         
        \textit{Output}: Approximate Pareto frontier \(P\)
        \begin{algorithmic}[1]
            \Procedure{Pareto\_mapper}{$p_{XY}, \varepsilon$}
                \State \textbf{Pareto set} \(P = \emptyset\) \Comment{Initialize maintained Pareto set}
                \State \textbf{Queue} \(Q = \emptyset\) \Comment{Initialize search queue}

                \State \textbf{Point} \(p = (\mathrm{x=} -\operatorname{H}(p_X), \mathrm{y=} \operatorname{I}(p_{X; Y}), \mathrm{f=} \operatorname{id})\) \Comment{Evaluate trivial clustering}
                \State \(P \leftarrow  \textsc{insert}(p, P)\) 

                \State \(Q \leftarrow  \textsc{enqueue}(\operatorname{id}, Q) \) \Comment{Start with identity clustering \(\operatorname{id}: [n] \rightarrow [n]\) where \(n = |X|\)}
                
                \While{\(Q\) is not \(\emptyset\)}
                    \State \(f = \textsc{dequeue}(Q)\) 
                    \State \(n = |\operatorname{range}(f)|\)
                    \For{ \(0 < i < j < n \) } \Comment{Loop over all pairs of output clusters of \(f\)}
                        \State \(f' = c_{i,j} \circ f\) \Comment{Merge clusters \(i, j\) output \(f\)}
                        \State \textbf{Point} \(p = \mathrm{Point}(\mathrm{x=} -\operatorname{H}(p_{f'(X)}), \mathrm{y=} \operatorname{I}(p_{f'(X); Y}), \mathrm{f=} f')\) 
                        \State \(d = \textsc{pareto\_distance}(p, P)\)

                        \State \(P \leftarrow \textsc{pareto\_add}(p, P)\) \Comment{Keep track of point and clustering in Pareto set}

                        \State{\textbf{with} probability \(e^{-d/\varepsilon}\),  \(Q \leftarrow \textsc{enqueue}(f', Q)\)} \\
                    \EndFor
                \EndWhile 
                \Return{\(P\)}
            \EndProcedure
        \end{algorithmic}
    \end{algorithm}
    
    We focus on the general problem of the deterministic encoding of a discrete domain.
    For a finite set of inputs, \(X\), which we identify with the integers \([X] \equiv \{1, \ldots, |X|\}\), we seek a mapping to a set \(Z\), where \(|Z| \le |X|\). 
    The search space is therefore the space of functions \(f: [X] \rightarrow [Z]\), which we call ``encodings'' or equivalently, ``hard clusterings'', where \(Z=f(X)\) is interpreted as the number of the cluster to which \(X\) is assigned.
    We evaluate the encodings using the Deterministic Information Bottleneck objective, but regardless of which objectives are chosen, we will refer to all two-objective optimization problems over the space of such functions \(f\) as ``clustering problems''.

    The goal of this paper is to motivate the study of the Pareto frontiers to primal clustering problems and to present a practical method for their discovery.

    \subsection{Objectives and Relation to Prior Work}
    \label{sec:objectives-and-priorwork}
    
    We focus on the task of lossy compression, which is a trade-off between retaining the salient features of a source and parsimony.
    Rate-distortion theory provides the theoretical underpinnings for studying lossy data compression of a source random variable \(X\) into a compressed representation \(Z\) \cite{CT06}.
    In this formalism, a distortion function quantifies the dissatisfaction with a given encoding, which is balanced against the complexity of the compressed representation as measured by \(I(Z; X)\).
    In the well-known Information Bottleneck (IB) problem \cite{TPB00}, the goal is to preserve information about a target random variable \(Y\) as measured by the mutual information \(I(Z; Y)\);
    the IB can be viewed as a rate-distortion problem with the Kullback–Leibler divergence, \(D_{KL}( p_{Y | X} || p_{Y | Z})\), serving as the measure of distortion.
    In recent years, a number of similar bottlenecks have also been proposed inspired by the IB problem \cite{SS17b, AF18, F20}.
    We focus on one of these bottlenecks known as the Deterministic Information Bottleneck (DIB) \cite{SS17b}.

    \subsubsection{The Deterministic Information Bottleneck}
    \label{sec:dib}
    In the DIB problem, we are given random variables \(X\) and \(Y\) with joint probability mass function (PMF) \(p_{XY}\), and we would like to maximize \(I(Z; Y)\) subject to a constraint on \(H(Z)\).
    As in \cite{SS17b}, we further restrict ourselves to the compression of discrete domains, where \(X\), \(Y\) and \(Z\) are finite, discrete random variables.
    We note that DIB-optimal encodings are deterministic encodings \(Z = f(X)\) \cite{SS17b}, and we can therefore focus on searching through the space of functions \(f: [X] \rightarrow [Z]\), justifying the interpretation of DIB as a clustering problem.
    Since the optimization is being performed over a discrete domain in this case, not all points along the frontier are achievable.
    Nonetheless, we define the Pareto frontier piecewise as the curve that takes on the minimum vertical value between any two adjacent points on the frontier.

    Formally, given \(p_{XY}\), the DIB problem seeks an encoding \(f^*: X \rightarrow Z\) such that, \(Z^* = f^*(X)\) maximizes the relevant information captured for a given entropy limit \(H^*\):
    \begin{equation}
        f_{\mathrm{primal}}^* \equiv \argmax_{f: H[f(X) \rule[1.35ex]{0pt}{1.pt}]\le H^*} I \left(Y; f(X) \rule[1.85ex]{0pt}{1pt} \right)
        \label{eq:primalopt}
    \end{equation}
    We will refer to the constrained version of the DIB problem in Equation~\eqref{eq:primalopt} as the \emph{primal} DIB problem, to differentiate it from its more commonly studied Lagrangian form \cite{SS17b}:
    \begin{equation}
        f_{\mathrm{Lagrangian}}^* \equiv \argmax_f I \left(f(X); Y \rule[1.85ex]{0pt}{1pt} \right) - \beta H \left(f(X)  \rule[1.85ex]{0pt}{1pt} \right)
        \label{eq:lagrangeopt}
    \end{equation}
    In this form, which we call the \emph{Lagrangian} DIB, a trade-off parameter \(\beta\) 
    is used instead of the entropy bound \(H^*\) to parameterize $f_*$ and quantify the importance of memorization relative to parsimony.
    The Lagrangian relaxation has the benefit of removing the non-linear constraint at the cost of optimizing a proxy to the original function, known as the DIB Lagrangian, but it comes at the cost of being unable to access points off the convex hull of the trade-off.
    We note that while we use the terminology `primal DIB' to differentiate it from its Lagrangian form, we do not study its `dual' version in this paper. 

    Many algorithms have been proposed for optimizing the IB, and more recently, the DIB objectives \cite{HWD17}.
    An iterative method that generalizes the Blahut-Arimoto algorithm was proposed alongside the IB \cite{TPB00} and DIB \cite{SS17b} algorithms.
    For the hierarchical clustering of finite, discrete random variables \(X\) and \(Y\) using the IB objective, both divisive \cite{PTL94} and agglomerative \cite{ST99} methods have been studied.
    Relationships between geometric clustering and information theoretic clustering can also be used to optimize the IB objective in certain limits \cite{BMDG05}.
    More recently, methods using neural network-based variational bounds have been proposed \cite{AFDM16}.
    However, despite the wealth of proposed methods for optimizing the (D)IB, past authors \cite{HWD17, TPB00, SS17b, AFDM16, ATMS04} has focused only on the Lagrangian form of Equation~\eqref{eq:lagrangeopt} and are therefore unable to find convex portions of the frontier.

    Frontiers of the DIB Lagrangian and primal DIB trade-offs are contrasted in Figure~\ref{fig:titular}, with the shaded gray region indicating the shroud that the optimization of the Lagrangian relaxation places on the frontier (the particular frontier presented is discussed in Section~\ref{sec:alphabet-dataset}).
    Points within the shaded region are not accessible to the Lagrangian formulation of the problem as they do not optimize the Lagrangian.
    We also note that while the determinicity of solutions is a consequence of optimizing the Lagrangian DIB \cite{SS17b}, the convex regions of the primal DIB frontier is known to contain soft clusterings \cite{KTV18, TW19}.
    In our work, the restriction to hard clusterings can be seen as a part of the problem statement.
    Finally, we adopt the convention of flipping the horizontal axis as in \cite{TW19} which more closely matches the usual interpretation of a Pareto frontier where points further up and to the right are more desirable.

    \subsubsection{Discrete Memoryless Channels}
    A closely related trade-off is that between \(I(Z; Y)\) and the number of clusters \(|Z|\), which has been extensively studied in the literature on the compression of discrete memoryless channels (DMCs) \cite{KY14, ZK16, HWD17}.
    In Figure~\ref{fig:titular} and the other frontier plots presented in Section~\ref{sec:examples}, the DMC optimal points are plotted as open circles.
    The DIB and DMC trade-offs are similar enough that they are sometimes referred to interchangeably \cite{HWD17}: some previous proposals for solutions to the IB \cite{ST99} are better described as solutions to the DMC trade-off.
    We would like to make this distinction explicit, as we seek to demonstrate the richness of the DIB frontier over that of the DMC frontier.
    
    \subsubsection{Pareto Front Learning}
    In recent work by Navon et al. \cite{NSFC20}, the authors define the problem of Pareto Front Learning (PFL) as the task of discovering the Pareto frontier in a multi-objective optimization problem, allowing for a model to be selected from the frontier at runtime.
    Recent hypernetwork-based approaches to PFL \cite{NSFC20, LYZK20} are promising being both scalable and in principle capable of discovering the entirety of the primal frontier.
    Although we use a different approach, our work can be seen as an extension to the existing methods for PFL to discrete search spaces.
    Our Pareto Mapper algorithm performs PFL for the task of hard clustering, and our analysis provides evidence for the tractability of the PFL in this setting.
    
    We also note similarities to the problems of Multi-Task Learning and Multi-Objective Optimization.
    The main difference between these tasks and the PFL setup is the ability to select Pareto-optimal models at runtime.
    We direct the reader to \cite{NSFC20}, which provides a more comprehensive overview of recent work on these related problems.
    
    \subsubsection{Motivation and Objectives}
    Our work is, in spirit, a generalization of \cite{TW19}, which demonstrated a method for mapping out the primal DIB frontier for the special case of  binary classification (i.e., \(|Y| = 2\)).
    Although we deviate from their assumptions, assuming that \(X\) is discrete (rather than continuous in \cite{TW19}), and being limited to deterministic encodings (rather than stochastic ones in \cite{TW19}), and thus our results are not strictly comparable, our goal of mapping out the primal Pareto frontier is done in the same spirit.

    The most immediate motivation for mapping out the primal Pareto frontier is that its shape is useful for model selection: given multiple candidate solutions, each being near the frontier, we would often like to be able to privilege one over the others.
    For example, one typically favors the points that are the most robust to input noise, that is, those that are most separated from their neighbors, appearing as concave corners in the frontier.
    For the problem of geometric clustering with the Lagrangian DIB, the angle between piecewise linear portions of its frontier, known as the ``kink angle'', has been proposed as a criterion for model selection \cite{SS17a}.
    Using the primal DIB frontier, we can use distance from the frontier as a sharper criterion for model selection;
    this is particularly evident in the example discussed in Section~\ref{sec:group-dataset}, where the most natural solutions are clearly prominent in the primal frontier but have zero kink angle.
    The structure of this frontier also encodes useful information about the data.
    For clustering, corners in this frontier often indicate scales of interests: those at which the data best resolve into distinct clusters.
    Determining these scales is the goal of hierarchical clustering.

    Unlike the previously studied case of binary classification \cite{TW19}, no polynomial time algorithm is known for finding optimal clusterings for general \(|Y| > 2\) \cite{ZK16}.
    Finding an optimal solution to the DIB problem (i.e., one point near the frontier) is known to be equivalent to k-means in certain limits \cite{SB04,SS17a}, which is itself an NP-hard problem \cite{ACKS15}:
    mapping out the entirety of the frontier is no easier.
    More fundamentally, the number of possible encoders is known to grow super-exponentially with \(|X|\);
    therefore, it is not known whether we can even hope to store an entire DIB frontier in memory.
    Another issue is that of the generalization of the frontier in the presence of sampling noise.
    Estimation of mutual information and entropy for finite datasets is known to be a difficult problem with many common estimators either being biased, having high variance, or both \cite{NSB02, P03, KSG04, NBR04, POVA19}.
    This issue is of particular significance in our case as a noisy point on the objective plane can mask other, potentially more desirable, clusterings.

    It is these gaps in the optimization of DIB and DIB-like objectives that we seek to address.
    Firstly, existing work on optimization concerns itself only with finding a point on or near the frontier.
    These algorithms may be used to map out the Pareto frontier, but they need to be run multiple times with special care taken in sampling the constraint in order to attain the desired resolution of the frontier.
    Furthermore, we observe empirically that almost all of the DIB Pareto frontier is in fact convex.
    The majority of the existing algorithms applicable to DIB-like trade-offs optimize the Lagrangian DIB \cite{AFDM16, SS17b} and are therefore unable to capture the complete structure of the DIB frontier.
    Existing agglomerative methods \cite{ST99} are implicitly solving for the related but distinct DMC frontier, which has much less structure than the DIB frontier.
    Finally, existing methods have assumed access to the true distribution \(p_{XY}\) or otherwise used the maximum likelihood (ML) point estimators~\cite{SS17b, TW19}, which are known to be biased and have high variance for entropy and mutual information, which can have a significant effect on the makeup of the frontier.

    \subsection{Roadmap}
    The rest of this paper is organized as follows.
    In Section~\ref{sec:algorithms}, we tackle the issue of finding the Pareto frontier in practice by proposing a simple agglomerative algorithm capable of mapping out the Pareto frontier in one pass (Section~\ref{sec:pareto-mapper}) 
    and propose a modification that can be used to select robust clusterings with quantified uncertainties from the frontier when the sampling error is significant (Section~\ref{sec:robust-pareto-mapper}).
    We then present our analytic and experimental results in Section~\ref{sec:results}.
    In Section~\ref{sec:pareto-properties}, we provide evidence for the sparsity of the Pareto frontier, giving hope that it is possible to efficiently study it in practice.
    To demonstrate our algorithm and build intuition for the Pareto frontier structure, we apply it to three different DIB optimization problems in Section~\ref{sec:examples}: compressing the English alphabet, extracting informative color classes from natural images, and discovering group--theoretical structure.
    Finally, in Section~\ref{sec:discussion}, we discuss related results and directions for future work.
    

\section{Methods}
    \label{sec:algorithms}
    We design our algorithm around two main requirements: firstly, we would like to be able to optimize the primal objective of Equation~\eqref{eq:primalopt} directly, thereby allowing us to discover convex portions of the frontier; secondly, we would like a method that records the entire frontier in one pass rather than finding a single point with each run.
    While the task of finding the exact Pareto frontier is expected to be hard in general, Theorem~\ref{thm:scaling-by-area} applied to Example~\ref{ex:scaling-independent-rv}, gives us hope that the size of the Pareto frontier grows at most polynomially with input size \(|X|\).
    As is often the case when dealing with the statistics of extreme values, we expect that points near the frontier are rare and propose a pruned search technique with the hope that significant portions of the search space can be pruned away in practice.
    In the spirit of the probabilistic analysis provided above, we would like an algorithm that samples from a distribution that favors near-optimal encoders, thereby accelerating the convergence of our search.
    For this reason, we favor an agglomerative technique, with the intuition that there are good encoders that can be derived from merging the output classes of other good encoders.
    An agglomerative approach has the additional benefit of being able to record the entire frontier in one pass.
    For these reasons, we propose an agglomerative pruned search algorithm for mapping the Pareto frontier in Section~\ref{sec:pareto-mapper}.
    We also describe in Section~\ref{sec:robust-pareto-mapper} a modification of the algorithm that can be applied to situations where only a finite number of samples are available.

    \subsection{The Pareto Mapper}
    \label{sec:pareto-mapper}
    
    Our method, dubbed the \emph{Pareto Mapper} (Algorithm~\ref{alg:mapper}), is a stochastic pruned agglomerative search algorithm with a tunable parameter \(\epsilon\) that controls the search depth.
    The algorithm is initialized by enqueuing the identity encoder, \(f = \operatorname{id}\), into the search queue.
    At each subsequent step (illustrated in Figure~\ref{fig:pareto-pedagogy}), an encoder is dequeued.
    A set containing all of the Pareto-optimal encoders thus far encountered is maintained as the algorithm proceeds.
    All encoders that can be constructed by merging two of the given encoder's output classes (there are \(O(n^2)\) of these) are evaluated against the frontier of encoders seen so far; 
    we call encoders derived this way child encoders.
    If a child encoder is a distance \(d\) from the current frontier, we enqueue its children with probability \(e^{-d/\epsilon}\) and discard it otherwise, resulting in a search over an effective length-scale \(\epsilon\) from the frontier.
    The selection of \(\epsilon\) tunes the trade-off between accuracy and search time:
    \(\epsilon = 0\) corresponds to a greedy search that does not evaluate sub-optimal encodings, and \(\epsilon \rightarrow \infty\) corresponds to a brute-force search over all possible encodings.
    As the search progresses, the Pareto frontier is refined, and we are able to prune a larger majority of the proposed encoders.
    The output of our algorithm is a Pareto set of all found Pareto optimal clusterings of the given trade-off.
    \vspace{-6pt}
   \begin{figure}[H]
        \centering
        \includegraphics[width=12cm]{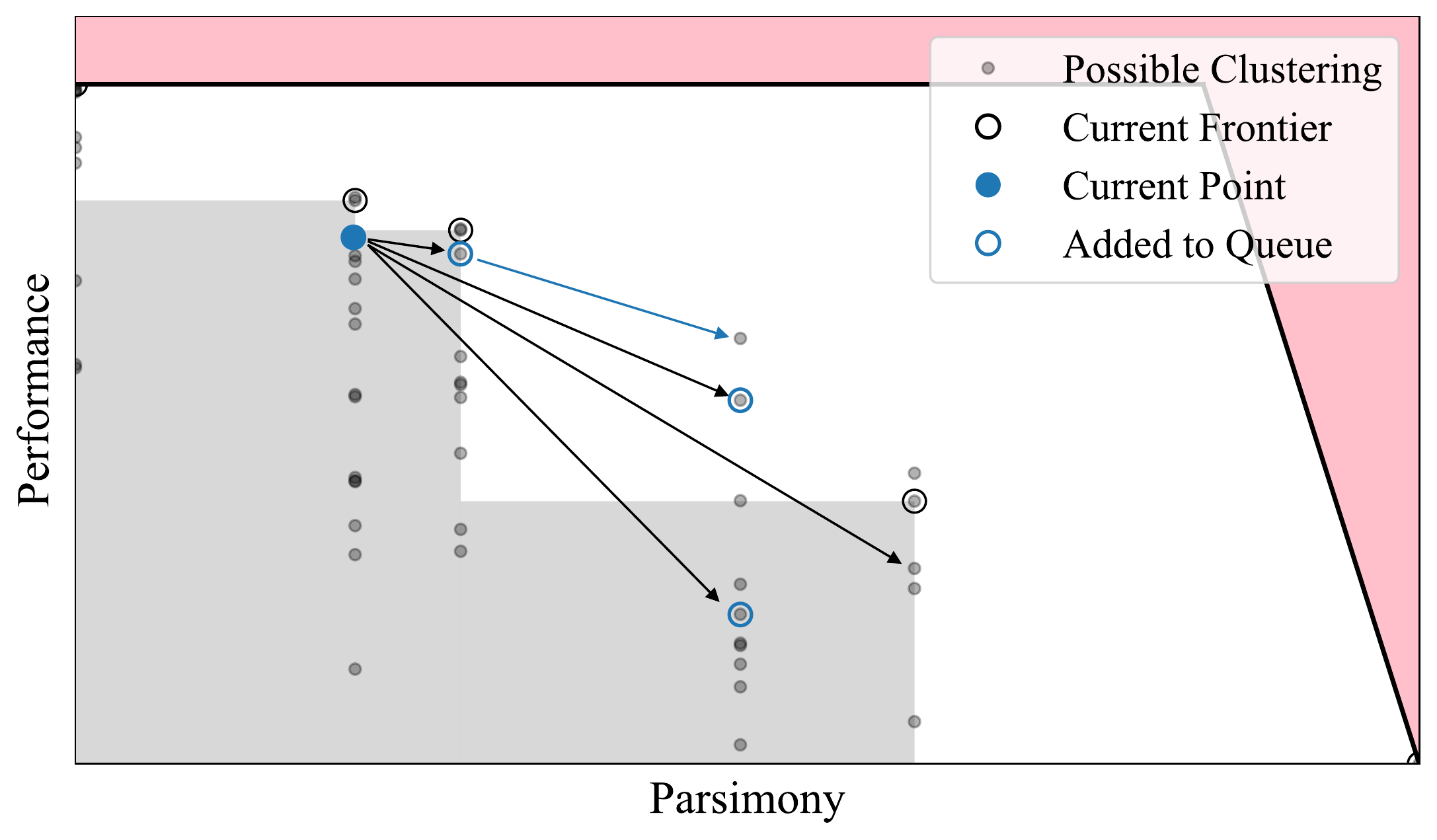}
        \caption{Illustration of one step in the main loop of the Pareto Mapper (Algorithm~\ref{alg:mapper}) mid-run.
        For pedagogical purposes, all possible encoders (filled gray circle) are plotted on the objective plane.
        The Pareto optimal points searched so far are marked with open black circles, and the region of the objective plane they dominate is shaded in gray.
        The black arrows show neighboring encoders and newly enqueued encoders are marked by open blue circles; 
        encodings that are optimal with respect to the current frontier are enqueued with certainty and sub-optimal encodings enqueued with probability \(e^{-d/\epsilon}\), where \(d\) is the distance from the frontier.
        Note that some Pareto optimal points are only accessible through sub-optimal encoders (blue arrow).
        }
        \label{fig:pareto-pedagogy}
    \end{figure}
    The Pareto frontier at any given moment is stored in a data structure, called a \emph{Pareto set}, which is a list of point structures.
    A point structure, \(p\), contains fields for both objectives \(p\mathrm{.x}\), \(p\mathrm{.y}\), and optional fields for storing the uncertainties \(p\mathrm{.dx}\), \(p\mathrm{.dy}\) and clustering function \(p\mathrm{.f}\).
    The Pareto set is maintained so that the Pareto-optimality of a point can be checked against a Pareto set of size \(m\) in \(\Theta(\log m)\) operations.
    Insertion into the data structure requires in the worst case \(\Theta(m)\) operations, which is optimal, as a new point could dominate \(\Theta(m)\) existing points necessitating their removal.
    We define the distance from the frontier as the minimum Euclidean distance that a point would need to be displaced before it is Pareto-optimal, which also requires in the worst case \(\Theta(m)\) operations.
    A list of pairs \((H(Z), I(Z; Y))\), sorted by its first index, provides a simple implementation of the Pareto set.
    The pseudocode for important auxiliary functions such as \(\textsc{pareto\_add}\) and \(\textsc{pareto\_distance}\) is provided in Appendix~\ref{app:auxiliary-functions}.

    Although we have provided evidence for polynomial scaling of size of the Pareto set, it is not obvious if the polynomial scaling of the Pareto set translates to the polynomial scaling of our algorithm, which depends primarily on how quickly the search space can be pruned away by evaluation against the Pareto frontier.
    To demonstrate the polynomial scaling of our algorithm with \(n\), we evaluate the performance of the Pareto Mapper on randomly generated \(p_{XY}\).
    Since \(\epsilon \rightarrow \infty\) corresponds to a brute-force search, and therefore has no hope of having polynomial runtime, we focus on the \(\epsilon \rightarrow 0\) case;
    we show later, in Section~\ref{sec:alphabet-dataset}, that \(\epsilon \rightarrow 0\) is often sufficient to achieve good results.
    For Figure~\ref{fig:scaling-random-data}a, we randomly sample \(p_{XY}\) uniformly over the simplex of dimension \(|X| |Y| - 1\) varying \(|X|\) with fixed \(|Y| = 30\).
    We find that the scaling is indeed polynomial.
    Comparing with the scaling of the size of the Pareto set shown in Figure~\ref{fig4}b, we see that approximately \(O(n)\) points are searched for each point on the Pareto frontier.
    While the computation time, empirically estimated to be \(\Theta(n^{5.0})\), is limiting in practice, we note that it is indeed polynomial, which is sufficient for our purposes.

    \begin{figure}[H]
        \centering 
        {\captionsetup{position=bottom,justification=centering}  \begin{subfigure}[b]{0.48\textwidth}
                \includegraphics[height=5.5cm]{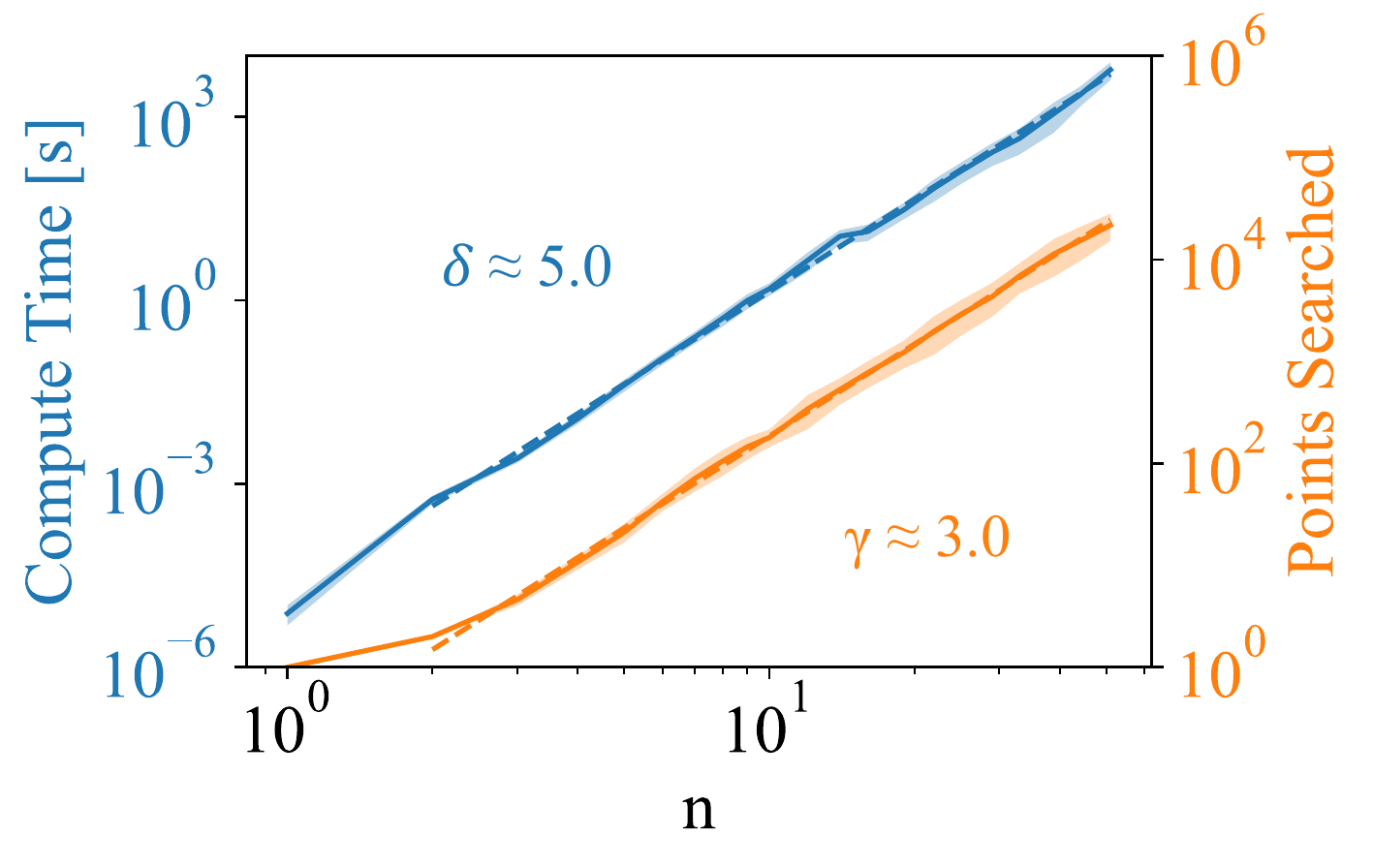}
                \caption{\label{fig:scaling-random-data-n}}
            \end{subfigure}
            ~ 
            \begin{subfigure}[b]{0.48\textwidth}
                \includegraphics[height=5.5cm]{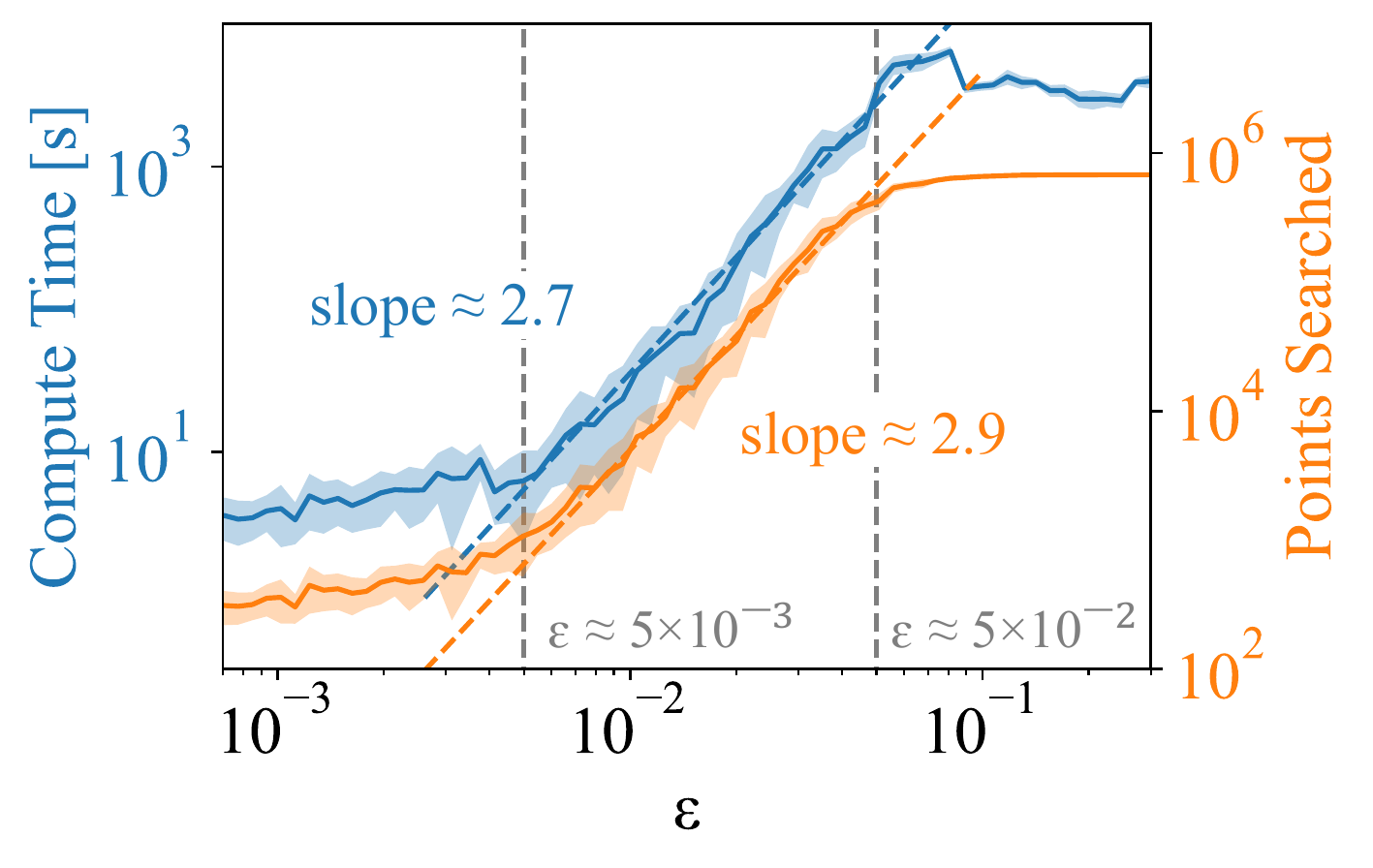}
                \caption{\label{fig:scaling-random-data-epsilon}}
            \end{subfigure}}
           \caption{Scaling of computation time (\textbf{left} scale) and points searched (\textbf{right} scale) as a function of (\textbf{a})~input size \(n\) at \(\epsilon = 0\), where we find compute time scales as \(O(n^\delta)\) and the size of the Pareto set scales as \(O(n^\gamma)\); (\textbf{b}) search parameter \(\epsilon\) for randomly generated \(p_{XY}\) of fixed size.}
           \label{fig:scaling-random-data}
    \end{figure}
    
    \begin{figure}[H]
        \centering
           {\captionsetup{position=bottom,justification=centering} \begin{subfigure}[b]{0.48\textwidth}
                \includegraphics[height=5.5cm]{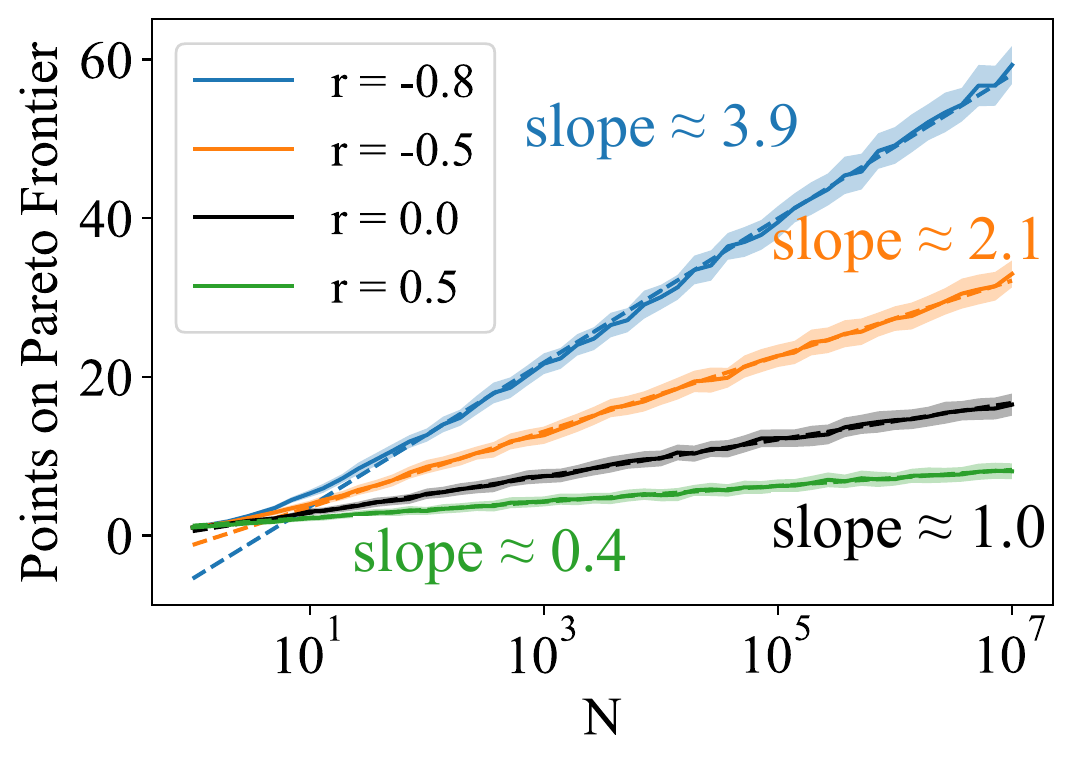}
                \caption{}
                \label{fig:Nscaling}
            \end{subfigure}%
            ~ 
            \begin{subfigure}[b]{0.48\textwidth}
                \includegraphics[height=6.25cm]{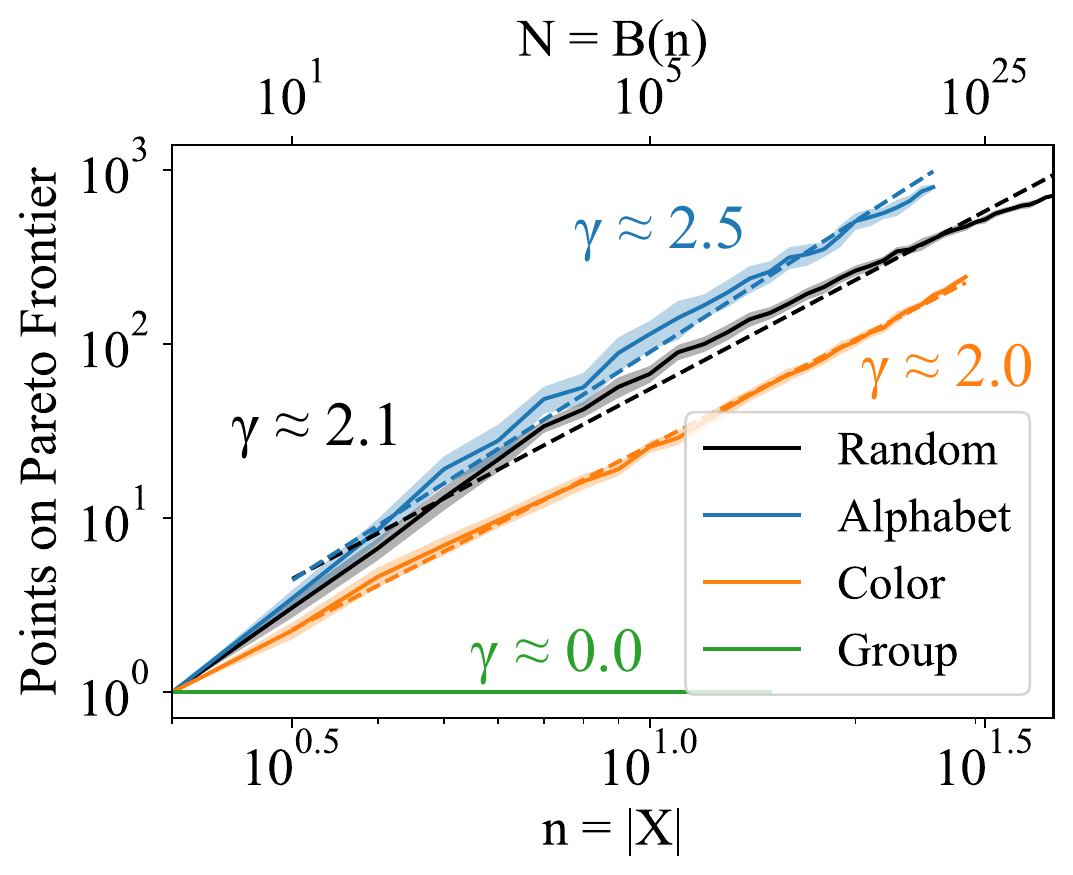}
                \caption{}
                \label{fig:nscaling-dibpareto}
            \end{subfigure}}
            \caption{Scaling of number of points on the Pareto frontier (\textbf{a}) as a function of \(N = |S|\) for bivariate Gaussian distributed \((U, V)\) with specified correlation \(r \equiv \sigma_{UV} / \sigma_U \sigma_V\), and (\textbf{b}) for the DIB problem with input size \(n\) where we find \(|\operatorname{Pareto}(S)| = O(n^\gamma)\).}
            
            \label{fig4}
    \end{figure}
    We also evaluate the scaling of our algorithm with \(\epsilon\).
    Again, we randomly sample \(p_{XY}\) uniformly over the simplex of dimension \(|X| |Y| - 1\), this time fixing \(|X| = 11\) and \(|Y| = 30\), with results plotted in Figure~\ref{fig:scaling-random-data}b.
    We find that the relevant scale for distances is between \(\epsilon_- \approx 5 \times 10^{-3}\) and \(\epsilon_+ \approx 5 \times 10^{-2}\); while the specifics of the characteristic range for \(\epsilon\) depends on the dataset, we empirically find that while \(\epsilon_+\) remains constant, \(\epsilon_-\) decreases as \(n\) increases.
    This is consistent with the fact that as \(n\) increases, the DIB plane becomes denser, and the average separation between points decreases.
    This would suggest that there is an \(n\) above which the runtime of the Pareto Mapper exhibits exponential scaling for any \(\epsilon > 0\).
    In the absence of noise, one can run the Pareto Mapper at a number of different values of \(\epsilon\) evaluating precision and recall with respect to the combined frontier to evaluate convergence (see Figure \ref{fig:performance-sweeps}b). 
    We discuss how to set \(\epsilon\) in the presence of noise due to sampling in Section~\ref{sec:alphabet-dataset}.

    \subsection{Robust Pareto Mapper}
    \label{sec:robust-pareto-mapper}
    So far, we have assumed access to the true joint distribution \(p_{XY}\).
    Normally, in practice, we are only provided samples from this distribution and must estimate both objective functions, the mutual information \(I(Z; Y)\) and entropy \(H(Z)\), from the empirical distribution \(\hat{p}_{XY}\).
    Despite the uncertainty in these estimates, we would like to find clusterings that are Pareto optimal with respect to the true distribution.
    Here, we propose a number of modifications to Algorithm~\ref{alg:mapper} that allow us to quantify our uncertainty about the frontier and thereby produce a frontier that is robust to sampling noise.
    The modified algorithm, dubbed the \emph{Robust Pareto Mapper} (Algorithm~\ref{alg:robust-mapper}), is described below.

    Given samples from the joint distribution, we construct the empirical joint distribution and run the Pareto Mapper (Algorithm~\ref{alg:mapper}) replacing the entropy and mutual information functions with their respective estimators.
    We use the entropy estimator due to Nemenman, Shafee, and Bialek (NSB) \cite{NSB02}, as it is a Bayesian estimator that provides not only a point estimate but also provides some bearing on its uncertainty, although another suitable estimator can be substituted.
    We find that our method works well even with point estimators, in which case resampling techniques (e.g., bootstrapping) are used to obtain the uncertainty in the estimate.
    After running the Pareto Mapper, points that are not significantly different from other points in the Pareto set are removed.
    This filtering operation considers points in order of ascending uncertainty, as measured by the product of its standard deviations in \(H(Z)\), and \(I(Z; Y)\).
    Subsequent points are added as long as they do not overlap with the confidence interval in either \(H\) or \(I\) with a previously added point, and they are removed otherwise.
    There is some discretion in choosing the confidence interval, which we have chosen empirically to keep the discovered frontier robust between independent runs.
    This filtering step is demonstrated in Section~\ref{sec:alphabet-dataset}.

    \begin{algorithm}[H]
        \caption{Robust Pareto Mapper: dealing with finite data}
        \label{alg:robust-mapper}        
          \textit{Input}: Empirical joint distribution \(\hat{p}_{XY}\), search parameter \(\varepsilon\), and sample size \(S\)          
          
        \textit{Output}: Approximate Pareto frontier \(P\) with uncertainties
        \begin{algorithmic}[1]
            \Procedure{Robust\_Pareto\_Mapper}{$\hat{p}_{XY}, \varepsilon$}
                \State \textbf{Pareto set} \(P \leftarrow \textsc{pareto\_mapper}(\hat{p}_{XY}, \epsilon)\) \Comment{Run \textsc{pareto\_mapper} with suitable estimators}
                \State \textbf{Pareto set} \(P' = \emptyset\) \Comment{Initialize set of robust encoders}
                
                \For{\(p \in P\)} \Comment{This step can be skipped if an interval estimator is used above}
                    \State \((p\mathrm{.dx}, p\mathrm{.dy}) \leftarrow \textsc{resample}(p, \hat{p}_{XY})\) \Comment{Store uncertainty of points on frontier}
                \EndFor

                \For{\(p \in P\) in ascending order of uncertainty}
                    \If{\(p\) is significantly different than all \(q \in P'\)} 
                        \State \(P' \leftarrow \textsc{pareto\_add}(p, P')\) \Comment{Filter with preference for points with low variance}
                    \EndIf
                \EndFor

                \Return{\(P'\)}
            \EndProcedure
        \end{algorithmic}
    \end{algorithm}


\section{Results}
    \label{sec:results}
    \subsection{General Properties of Pareto Frontiers}
    \label{sec:pareto-properties}
    Before introducing the specifics of the DIB problem, we would like to understand a few general properties of the Pareto frontier.
    The most immediate challenge we face is the size of our search space.
    For an input of size \(|X|\), the number of points on the DMC frontier is bounded by \(|X|\), but there is no such limit on the DIB frontier.
    Given the combinatorial growth of the number of possible clusterings with \(|X|\), it is not immediately clear that it is possible to list all of the points on the frontier, let alone find them.
    If we are to have any chance at discovering and specifying the DIB frontier, it must be that DIB-optimal points are sparse within the space of possible clusterings, where sparse is taken to mean that the size of the frontier scales at most polynomially with the number of items to be compressed.

    In this section, we provide sufficient conditions for the sparsity of the Pareto set in general and present a number of illustrative examples.
    We then apply these scaling relationships to the DIB search space and provide numerical evidence that the number of points grows only polynomially with \(n \equiv |X|\) for most two-objective trade-off tasks.

    \subsubsection{Argument for the Sparsity of the Pareto Frontier}
    First, we will formally define a few useful terms.
    Let \(S = \{\vec{s_i}\}_{i=1}^N\) be a sample of \(N\) independent and identically distributed (i.i.d.) bivariate random variables representing points \(\vec{s_i} = (U_i, V_i)\) in the Pareto plane.

    \begin{definition}
        A point \((U, V) \in S\) is \textit{maximal} with respect to \(S\), or equivalently called \textit{Pareto}-optimal, if \(\forall (U_i, V_i) \in S\), \(\exists V_i > V \implies U > U_i\).
        In other words, a point is maximal with respect to \(S\) if there are no points in \(S\) both above and to its left.
    \end{definition}

    \begin{definition}
        For a set of points \(S \subset \R^2\), the \textit{Pareto set} \(\operatorname{Pareto}(S) \subseteq S\) is the largest subset of \(S\) such that all \((U, V) \in \operatorname{Pareto}(S)\) are maximal with respect to \(S\).
    \end{definition}

    Now, we can state the main theorem of this section, which we prove in Appendix~\ref{app:pareto-size}.
    \begin{restatable}[]{theorem}{scalingbyarea}
    \label{thm:scaling-by-area}
        Let  \(S = \{(U_i, V_i)\}_{i=1}^N\) be a set of bivariate random variables drawn i.i.d. from a distribution with Lipschitz continuous CDF \(F(u, v)\), and invertible marginal CDFs \(F_U, F_V\).
        Define the region 
        \begin{equation}
            R_N \equiv \left\{(u, v) \in [0, 1] \times [0, 1] : u + v - C(u, v) \ge e^{-\frac{1}{N}}\right\}
        \end{equation} 
        where \(C(u, v)\) denotes the copula of \((U_i, V_i)\), which is the function that satisfies \(F(u, v) = C(F_U(u), F_V(v))\).

        Then, if the Lebesgue measure of this region \(\lambda(R_N) = \Theta\left(\frac{\ell(N)}{N}\right)\) as \(N \rightarrow \infty\), we have \(\E\left[|\operatorname{Pareto}(S)|\right] = \Theta(\ell(N))\).
    \end{restatable}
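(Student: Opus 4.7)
The plan is to express $\E[|\operatorname{Pareto}(S)|]$ as an integral against the copula measure, isolate the effective support of the integrand as $R_N$, and finally convert the copula-weighted estimate into a Lebesgue-weighted one.

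By linearity of expectation and i.i.d.\ symmetry, $\E[|\operatorname{Pareto}(S)|] = N \cdot \Pr((U_1, V_1) \text{ is Pareto-optimal})$. Conditional on $(U_1, V_1) = (u, v)$, no other sample dominates it with probability $(F_U(u) + F_V(v) - F(u, v))^{N-1}$, since a single sample dominates with probability $1 - F_U(u) - F_V(v) + F(u, v)$. The change of variables $a = F_U(u)$, $b = F_V(v)$, which is well-defined by the invertibility of the marginals, pushes the law of $(U_1, V_1)$ forward to the copula distribution and yields
\[
    \E[|\operatorname{Pareto}(S)|] = N \int_{[0,1]^2} g(a, b)^{N-1}\, dC(a, b), \qquad g(a, b) := a + b - C(a, b).
\]

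Since $g \in [0, 1]$, the integrand $g^{N-1}$ transitions sharply from $\Theta(1)$ to exponentially suppressed across the level set $g = e^{-1/N}$, i.e., the boundary of $R_N$. On $R_N$ we have $e^{-(N-1)/N} \le g^{N-1} \le 1$, so $\int_{R_N} g^{N-1}\, dC = \Theta(C(R_N))$. For the tail I would use the layer-cake identity
\[
    \int g^{N-1}\, dC = (N-1)\int_0^1 s^{N-2}\, C(\{g \ge s\})\, ds,
\]
and the substitution $s = e^{-t/N}$ to rewrite the right-hand side as a Laplace-type integral $\int_0^\infty e^{-t+o(1)}\, C(\{g \ge e^{-t/N}\})\, dt$. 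Since $C(\{g \ge e^{-1/N}\}) = C(R_N)$ and the growth of $C(\{g \ge e^{-t/N}\})$ in $t$ is controlled by the Lipschitz regularity of $F$, the tail is dominated by the $t = \Theta(1)$ contribution and is therefore $O(C(R_N))$. Combining the interior and tail bounds gives $\int g^{N-1}\, dC = \Theta(C(R_N))$. Finally, the Lipschitz continuity of $F$ together with the invertibility of $F_U, F_V$ pin the copula density above and below by positive constants on the bulk of $R_N$, so $C(R_N) = \Theta(\lambda(R_N))$; combining this with the hypothesis $\lambda(R_N) = \Theta(\ell(N)/N)$ yields $\E[|\operatorname{Pareto}(S)|] = \Theta(\ell(N))$.

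\emph{The main obstacle} is two-fold. First, the tail estimate on $R_N^c$ requires showing that the growth of $C(\{g \ge s\})$ as $s$ drops below $e^{-1/N}$ is slow enough not to overwhelm the decay of $s^{N-2}$; on shells $s \in [e^{-(k+1)/N}, e^{-k/N}]$ the factor $s^{N-2}$ is $\Theta(e^{-k})$, while a Lipschitz bound on the shell measures makes the geometric sum absolutely summable. Second, the translation between $C(R_N)$ and $\lambda(R_N)$ is delicate because $R_N$ hugs the boundary $a = 1$ or $b = 1$, where generic copula densities can be degenerate or singular. Both issues are resolved by leveraging the Lipschitz continuity of $F$ and the invertibility of the marginals to control the copula density on a neighborhood of the region that actually matters.
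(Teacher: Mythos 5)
Your setup is correct and matches the paper's starting point: by linearity and i.i.d.\ symmetry, \(\E\left[|\operatorname{Pareto}(S)|\right] = N\int_{[0,1]^2} g(a,b)^{N-1}\,dC(a,b)\) with \(g = a+b-C\) (your exponent \(N-1\) is the careful version), and on \(R_N\) the integrand is between \(e^{-1}\) and \(1\), so the interior contribution is \(\Theta\) of the \emph{copula} mass of \(R_N\). The genuine gap is your final conversion step: the claim that Lipschitz continuity of \(F\) plus invertibility of the marginals pins the copula density above and below by positive constants on the bulk of \(R_N\), hence \(C(R_N) = \Theta(\lambda(R_N))\). Neither half of this follows from the hypotheses: a Lipschitz CDF does not even guarantee that a copula density exists (the Fr\'echet--Hoeffding copulas are \(1\)-Lipschitz and singular), and nothing prevents an existing density from vanishing on most of \(R_N\). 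Concretely, take the checkerboard copula with density \(2\) on \([0,\tfrac12]\times[\tfrac12,1]\cup[\tfrac12,1]\times[0,\tfrac12]\) and \(0\) elsewhere: \(F\) is Lipschitz with uniform, invertible marginals, and on \([\tfrac12,1]^2\) one has \(C(u,v)=u+v-1\), so \(g\equiv 1\) there and \(\lambda(R_N)\ge\tfrac14\) for all \(N\), while the copula mass of \(R_N\) is only \(\Theta\!\left(\tfrac{\log N}{N}\right)\); your exact formula then gives \(\E\left[|\operatorname{Pareto}(S)|\right]=\Theta(\log N)\), not \(\Theta(N\lambda(R_N))=\Theta(N)\). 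So the passage from copula measure to Lebesgue measure is not a technicality to be absorbed into ``the bulk of \(R_N\)''; it is exactly where an additional assumption (a copula density bounded above, and bounded away from zero where it matters) has to enter.

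For comparison, the paper's proof follows the same skeleton---exact expectation, split at the level \(e^{-1/N}\) into \(R_N\), a shell, and a far region, show the latter two are negligible---but it works with the Lebesgue-weighted integral \(N\int f\, g^{N-1}\,du\,dv\) and invokes only an upper bound \(f\le f_{\max}\) to reach \(O(\ell(N))\); the lower-bound direction is precisely the soft spot you also hit. A secondary issue: your tail estimate asserts that the growth of \(C(\{g\ge e^{-t/N}\})\) is ``controlled by the Lipschitz regularity of \(F\),'' but Lipschitz continuity of \(F\) controls the \emph{Lebesgue} measure between nearby level sets of \(g\) (this is what the paper uses for its shell \(R_N'\)), not the copula mass of the shells \(\{e^{-(k+1)/N}\le g< e^{-k/N}\}\), which is what your geometric-series bound needs; as written this is asserted rather than proved. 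The clean way to finish along your route is to make explicit the hypothesis you are implicitly using---an absolutely continuous copula whose density is bounded above on \([0,1]^2\) and below on \(R_N\)---after which both your tail bound and \(C(R_N)=\Theta(\lambda(R_N))\) follow immediately; without some such strengthening, the step does not go through.
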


    \begin{example}
    Let us consider the case of independent random variables with copula \(C(u, v) = uv\).
    Note that in this case, the level curves \(u + v - C(u, v) = e^{-\frac{1}{N}}\) are given by \(v = \frac{e^{-\frac{1}{N}} - u}{1 - u}\).
    We can then integrate to find the area of the region \(R_N\)
    \begin{equation}
        \lambda(R_N) = 1 - \int_{0}^{e^{-\frac{1}{N}}} \frac{e^{-\frac{1}{N}} - u}{1 - u} du = e^{-1/n} \left(1 - e^{1/n}\right) \left(\log \left(1-e^{-1/n}\right)-1\right)
    \end{equation}
    Expanding for large \(N\), we find that \(\lambda(R_N) = \frac{\log N}{N} + O(N^{-1})\).
    We see that this satisfies the conditions for Theorem~\ref{thm:scaling-by-area} with \(\ell(N) = \log N\), giving \(\E_{S}\left[|\operatorname{Pareto}(S)|\right] = \Theta(\log N)\).
    \label{ex:scaling-independent-rv}
    \end{example}

    Additional examples can be found in Appendix~\ref{app:pareto-size}.
    Numerically, we see that for independent random variables \(U\) and \(V\), the predicted scaling holds even down to relatively small \(N\);
    furthermore, the linear relationship also holds for correlated Gaussian random variables \(U, V\) (Figure~\ref{fig4}a).
    The logarithmic sparsity of the Pareto frontier allows us to remain hopeful that it is possible, at least in principle, to fully map out the DIB frontier for deterministic encodings of discrete domains despite the super-exponential number of possible encoders.

    \subsubsection{Dependence on Number of Items \(|X|\)}
    The analysis above gives us hope that Pareto-optimal points are generally polylogarithmically sparse in \(N \equiv |S|\), i.e., 
    \(|\operatorname{Pareto}(S)| = O((\log N)^\gamma)\).
    Of course, the scenario with which we are concerned is one where the random variables \(U = -H(Z)\) and \(V = I(Z; Y)\);
    the randomness of \(U\) and \(V\) in this case comes from the choice of encoder \(f: X \rightarrow Z\) which, for convenience, we assume is drawn i.i.d. from some distribution over the space of possible encodings.
    Note that conditioned on the distribution of \(X\) and \(Y\), the points \((H(f(X)), I(f(X); Y))\) are indeed independent, although the agglomerative method we use to sample from the search space introduces dependence;
    however, in our case, this dependence likely helps the convergence of the algorithm.

    In the DIB problem, and clustering problems more generally, we define \(n \equiv |X|\) to be the size of the input.
    The search space is over all possible encoders \(f: X \rightarrow Z\), which has size \(N = B(n)\) where \(B(n)\) are the Bell numbers.
    Asymptotically, the Bell numbers grow super-exponentially: \(\ln{B(n)} \sim n\ln{n} - n\ln{\ln{n}}\), making an exhaustive search for the frontier intractable.
    We provide numerical evidence in Figure~\ref{fig4}b that the sparsity of the Pareto set holds in this case, with its size scaling as \(O(\operatorname{poly}(n))\), or equivalently, \(O(\operatorname{polylog}(N))\).
    While in the worst case, all \(B_n\) clusterings can be DIB-optimal (the case where \((p_{XY})_{ij} = \operatorname{diag}(\vec{r})\) for \(r_i\) drawn randomly from the \((n - 1)\)-dimensional simplex results in clusterings with strict negative monotonic dependence on the DIB plane, and therefore all points are DIB-optimal, see Appendix~\ref{app:pareto-size}), our experiments show that in practice, the size of the Pareto frontier (and compute time) grows polynomially with the number of input classes \(n\) (Figure \ref{fig:scaling-random-data}), with the degree of the polynomial depending on the details of the joint distribution \(p(x, y)\).
    This result opens up the possibility of developing a tractable heuristic algorithm that maps out the Pareto frontier, which we will explore in the remainder of this paper.

    \subsection{At the Pareto Frontier: Three Vignettes}
    \label{sec:examples}
    The purpose of this section is to demonstrate our algorithm and illustrate how the primal DIB frontier can be used for model selection and to provide additional insights about the data.
    To this end, we apply our algorithm to three different DIB optimization tasks: predicting each character from its predecessor in the English text, predicting an object from its color, and predicting the output of a group multiplication given its input.
    In all cases, the goal is to find a representation that retains as much predictive power as possible given a constraint on its entropy.
    We will describe the creation of each dataset and motivate its study.
    For all three tasks, we discuss the frontier discovered by our algorithm and highlight informative points on it, many of which would be missed by other methods either because they are not DMC-optimal or because they lie within convex regions of the frontier.
    For the task of predicting the subsequent English character, we will also compare our algorithm to existing methods including the Blahut--Arimoto algorithm, and a number of geometric clustering algorithms;
    we will also use this example to demonstrate the Robust Pareto Mapper (Algorithm~\ref{alg:robust-mapper}).

    \subsubsection{Compressing the English Alphabet}
    \label{sec:alphabet-dataset}

    First, we consider the problem of compressing the English alphabet with the goal of preserving the information that a character provides about the subsequent character.
    In this case, we collect \(2\) gram statistics from a large body of English text.
    Treating each character as a random variable, our goal is to map each English character \(X\) into a compressed version \(Z\) while retaining as much information as possible about the following character \(Y\).

    Our input dataset is a $27\times 27$ matrix of bigram frequencies for the letters a--z and the space character, which we denote ``\_'' in the figures below. 
    We computed this matrix from the 100 Mb {\it enwiki8} ({\url{http://prize.hutter1.net/}}) Wikipedia corpus after removing all symbols except letters and the space character, removing diacritics, and making all letters lower-case.

    The Pareto frontier is plotted in Figure~\ref{fig:alphabet-frontier} and the points corresponding to some interesting clusterings on the frontier are highlighted.
    We find that from \(2\) gram frequencies alone, the DIB-optimal encodings naturally discover relevant linguistic features of the characters, such as their grouping into vowels and consonants.

    \begin{figure}[H]
        \centering 
            \includegraphics[width=15.5cm]{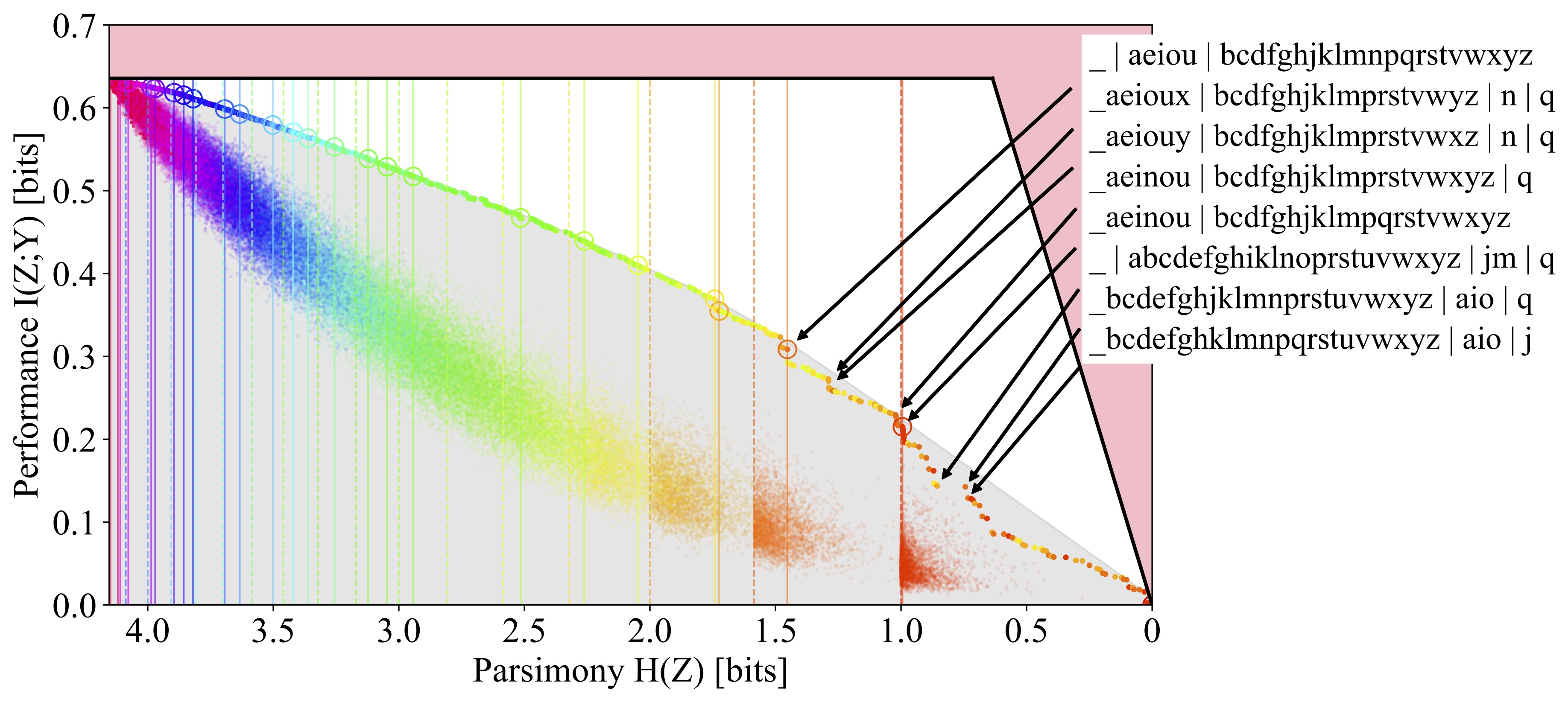}
            \caption{The primal DIB frontier of the English alphabet is compressed to retain information about the subsequent character.
            Points in the shaded gray region, indicating the convex hull, are missed by optimization of the DIB Lagrangian.
            Encodings corresponding to interesting features of the frontier are annotated, and DMC-optimal points are circled.
            Dotted vertical lines mark the location of balanced clusters (i.e., \(H(Z) = \log_2 k\) for \(k \in \{2, 3, \ldots\}\)), and solid vertical lines correspond to the entropy of the DMC-optimal encodings.
            A sample of encodings drawn uniformly at random for each \(|Z|\) is evaluated on the plane, illustrating the large distance from the frontier for typical points in the search space.
            The color indicates \(|Z|\).
            }
            \label{fig:alphabet-frontier}
    \end{figure}

    The DMC-optimal encoding corresponding to a cluster size of \(k = 2\) is nearly balanced (i.e., \(H(Z) = \log_2 2\)) and separates the space character and the vowels from most of the consonants.
    However, in contrast to the binary classification case of \(|Y| = 2\) studied in \cite{TW19}, the DMC-optimal encodings are far from balanced (i.e., \(H(Z) \approx \log_2(k)\)) for larger \(k\).
    We note that on the DIB frontier, the most prominent corners are often not the DMC-optimal points, which are circled.
    By looking at the corners and the DMC-optimal points near the corners, which are annotated on the figure, we discover the reason for this:
    distinguishing anomalous letters such as `q' has an outsized effect on the overall information relative to its entropy cost.
    These features are missed when looking only at DMC-optimal points, because although the \(2\) gram statistics of `q' are quite distinct (it is almost always followed by a `u'), it does not occur frequently enough to warrant its own cluster when our constraint is cluster count rather than entropy.
    In other words, `q` is quite special and noteworthy, and our Pareto plot reveals this while 
    the traditional DMC or DIB plots do not. 

    The frontier is seen to reveal features at multiple scales, the most prominent corner corresponding to the encoding that separates the space character, `\_', from the rest of the alphabet, and the separation of the vowels from (most of the) consonants.
    The separation of `q` 
    often results in a corner of a smaller scale because it is so infrequent.
    These corners indicate the natural scales for hierarchical clustering.
    We also note that a large majority of the points, including those highlighted above, are below the convex hull (denoted by the solid black line)
    and are therefore missed by algorithms that optimize the DIB Lagrangian.

    A random sample of clusterings colored by \(|Z|\) is also plotted on the DIB plane in Figure~\ref{fig:alphabet-frontier};
    a sample for each value of \(|Z| = \{1, \ldots, |X|\}\) is selected uniformly at random.
    We see that there is a large separation between the typical clustering, sampled uniformly at random, and the Pareto frontier, indicating that a pruned search based on the distance from the frontier, such as the Pareto Mapper of Algorithm~\ref{alg:mapper}, is likely able to successfully prune away much of the search space.
    A better theoretical understanding of the density could provide further insights on how the runtime scales with \(\epsilon\).

    We now compare the results of the Pareto Mapper (Algorithm~\ref{alg:mapper}) with other clustering methods.
    We first use the Pareto Mapper with \(\epsilon = 0\) to derive a new dataset from the original alphabet dataset (which has \(|X| = 27\)) by taking the \(|Z| = 10\) clustering with the highest mutual information.
    We are able to obtain a ground truth for this new dataset with \(|X| = 10\) using a brute force search, against which we compare the other methods.
    These methods are compared on the DIB plane in Figure~\ref{fig:pruning-compare} and in tabular form in Table~\ref{tab:pruning-compare}.
    Notably, all the encodings found by the Blahut--Arimoto algorithm used in \cite{TPB00, SS17b} are DIB-optimal, but as it optimizes the DIB Lagrangian, it is unable to discover the convex portions of the frontier.
    We also compare our algorithm to geometric clustering methods where we assign clusters pairwise distances according to the Jensen--Shannon distances between the conditional distributions \(p(Y | X = x_i)\).
    These methods perform poorly when compared on the DIB plane for a number of reasons:
    firstly, some information is lost in translation to a geometric clustering problem, since only pairwise distances are retained;
    secondly, the clustering algorithms are focused on minimizing the number of clusters and are therefore unable to find more than \(n\) points.
    Additionally, these geometric clustering algorithms, while similar in spirit, are not directly optimizing the DIB objective.

    \begin{figure}[H]
        \centering
        \includegraphics[width=13cm]{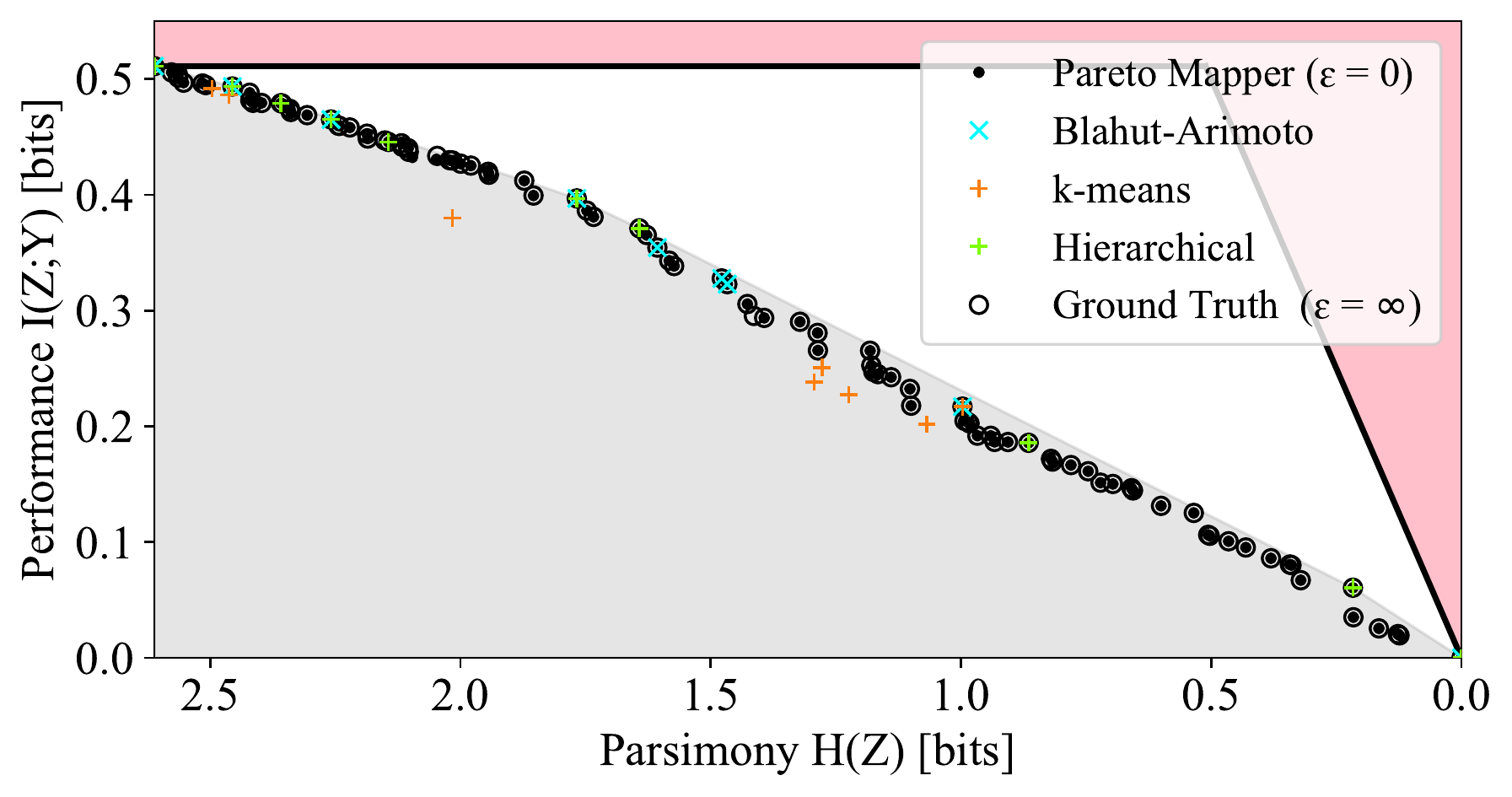}
        \caption{Comparison of the Pareto Mapper and other classification algorithms with ground truth for \(|X| = 10\). 
        The true Pareto frontier is calculated with a brute force search over all \(B(10) = 115,975\) clusterings \(f\).}
        \label{fig:pruning-compare}
    \end{figure}

    \begin{table}[H]
        \caption{Comparison of the performance of Algorithm~\ref{alg:mapper} with other clustering algorithms. Here, a true positive (TP) is a point that is correctly identified as being Pareto optimal by a given method; false positives (FP) and false negatives (FN) are defined analogously.}
        \setlength{\tabcolsep}{3.5mm}
        \label{tab:pruning-compare}
        \begin{tabular}{lrrrrrr}
            \hline
            \textbf{Method} & \textbf{Points} & \textbf{TP} & \textbf{FP} & \textbf{FN} & \textbf{Precision} & \textbf{Recall} \\
            \hline
            Ground truth (\(\epsilon \rightarrow \infty\)) & 94 & 94 & 0 & 0 & 1.00 & 1.00 \\
            Pareto Mapper (\(\epsilon = 10^{-2}\)) & 94 & 94 & 0 & 0 & {\bf 1.00} & {\bf 1.00} \\    
            \hline
            Pareto Mapper (\(\epsilon = 0\)) & 91 & 88 & 3 & 6& 0.97 & 0.94\\
            \hline
            Hierarchical (average) & 10 & 7 & 3 & 87 & 0.70 & 0.07 \\
            Hierarchical (single) & 10 & 10 & {\bf 0} & 84 & {\bf 1.00} & 0.11 \\
            Hierarchical (Ward) & 10 & 7 & 3 & 87 & 0.70 & 0.07 \\
            \hline
            k-means (JSD) & 10 & 3 & 7 & 91 & 0.30 & 0.02 \\
            k-means (wJSD) & 10 & 2 & 8 & 92 & 0.20 & 0.10 \\
            \hline
            Blahut Arimoto & 9 & 9 & {\bf 0} & 85 & {\bf 1.00} & 0.10 \\
            \hline
        \end{tabular}
    \end{table}

    To demonstrate the Robust Pareto Mapper (Algorithm~\ref{alg:robust-mapper}), we create a finite sample \(\hat{n}_{XY} = s \hat{p}_{XY}\) from a multinomial distribution with parameter \(p_{XY}\) and \(s\) trials.
    To quantify the sample size in natural terms, we define the sampling ratio \(r \equiv s / 2^{H(X, Y)}\).
    The results of the Robust Pareto Mapper on the alphabet dataset for several sampling ratios are shown in Figure~\ref{fig:robust-maps}.
    We note that even for relatively low sampling ratios, the algorithm is able to extract interesting information;
    it is able to quickly separate statistically distinct letters such as `q' and identify groups of characters such as vowels.
    As the sampling ratio increases, the Robust Pareto Mapper identifies a larger number of statistically significant clusterings (marked in red) from the rest of the discovered frontier (marked in gray).
    It is also notable that uncertainties in the entropy are typically lowest for encodings that split \(X\) into roughly equally probably classes;
    that these clusters are preferred is most readily seen in the highlighted clustering with \(H(Z) \approx 1\) in Figure~\ref{fig:robust-maps}b.
    We can see from these plots that, especially for low sampling ratios, the estimated frontier often lies above that of the true \(p_{XY}\) (solid black line).
    This is expected, as estimators for mutual information are often biased high.
    Despite this, the true frontier is found to lie within our estimates when the variance of the estimators is taken into account even for modest sampling ratios, as seen in the plot for \(r = 4\).

    \begin{figure}[H]
        \centering
          {\captionsetup{position=bottom,justification=centering}  \begin{subfigure}[b]{0.32\textwidth}
                \includegraphics[height=5.5cm]{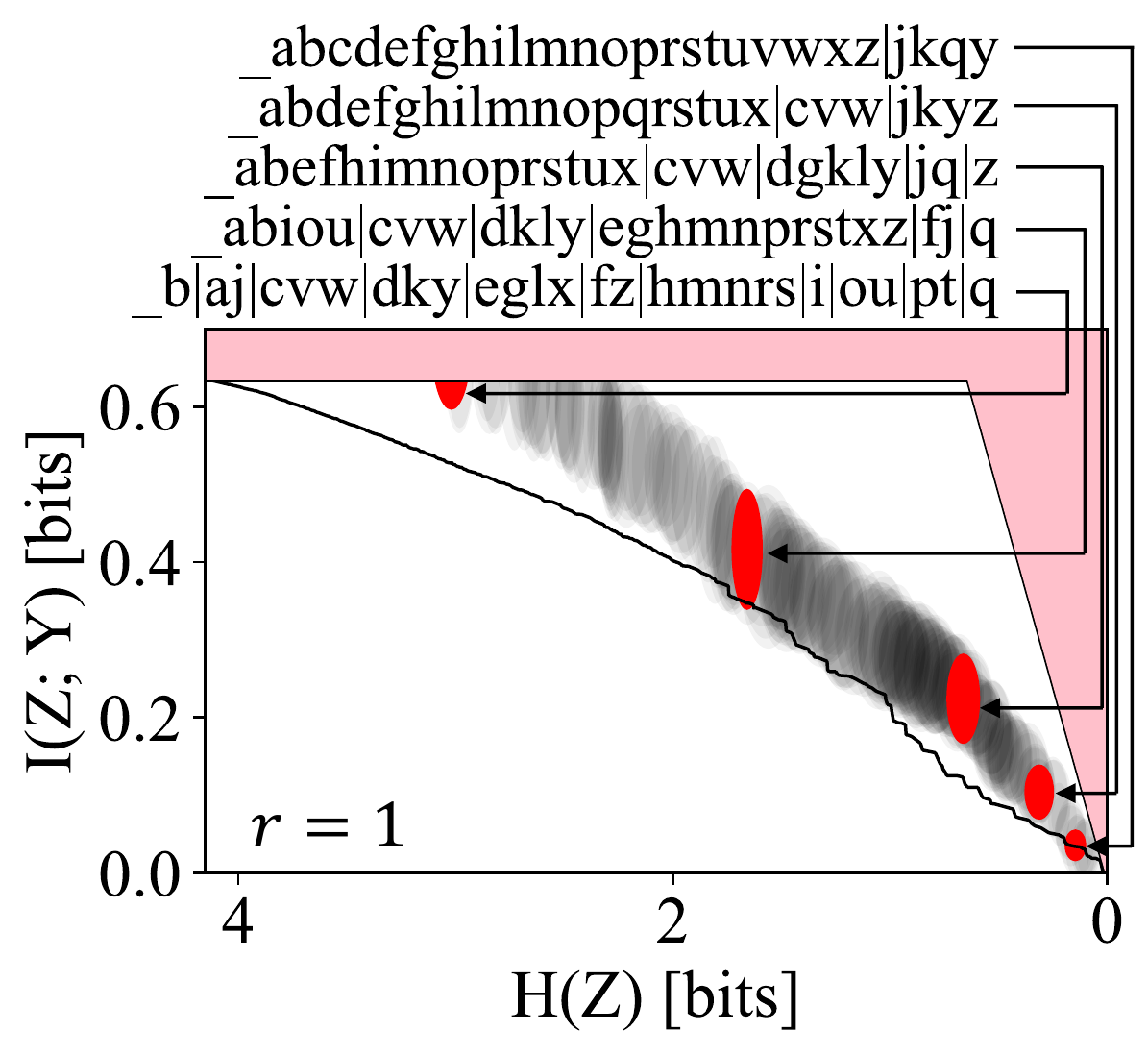}
                \caption{\label{fig:robust-maps-r01}}
            \end{subfigure}
            ~ 
            \begin{subfigure}[b]{0.32\textwidth}
                \includegraphics[height=5.5cm]{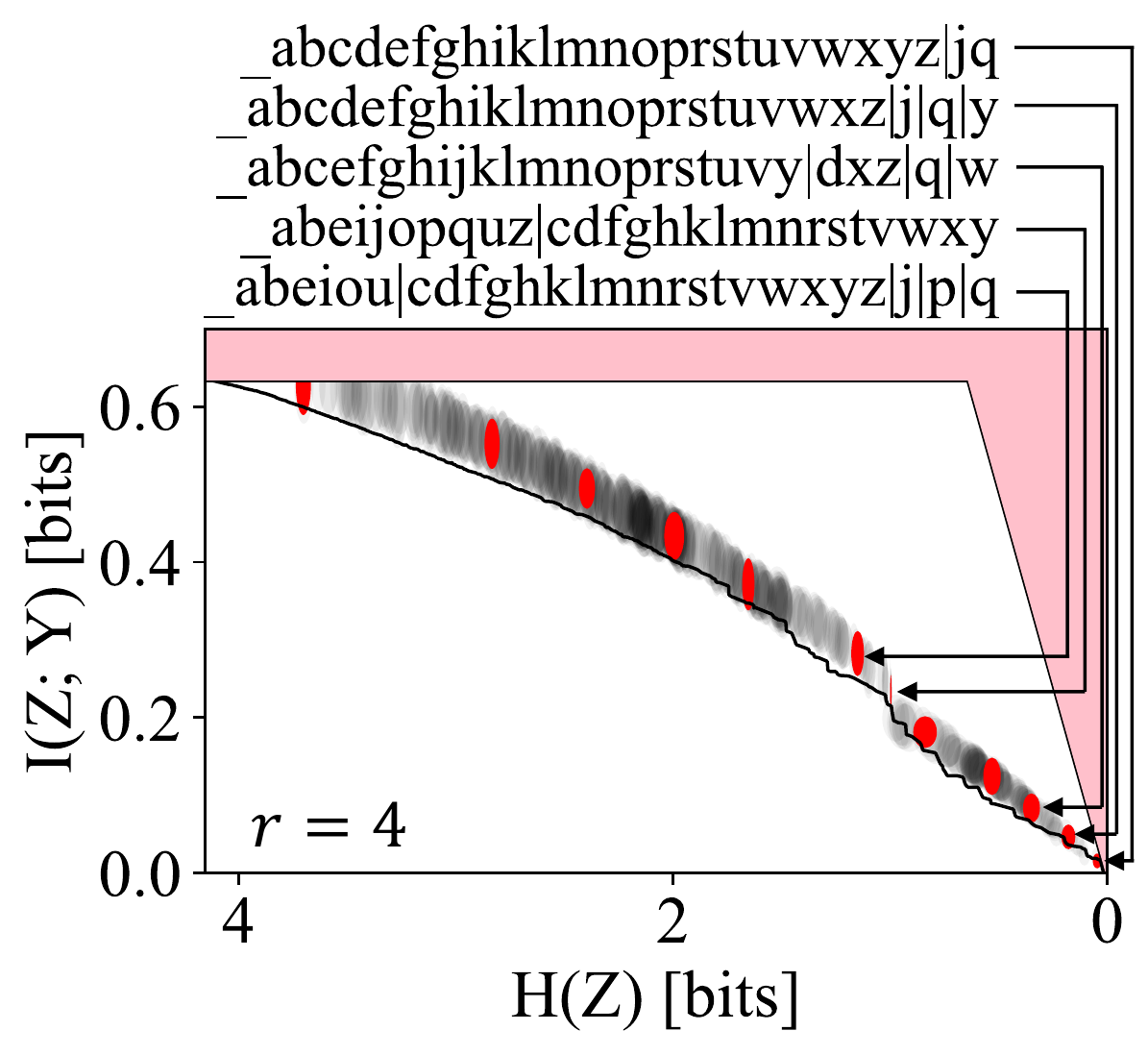}
                \caption{\label{fig:robust-maps-r04}}
            \end{subfigure}
            \begin{subfigure}[b]{0.32\textwidth}
                \includegraphics[height=5.5cm]{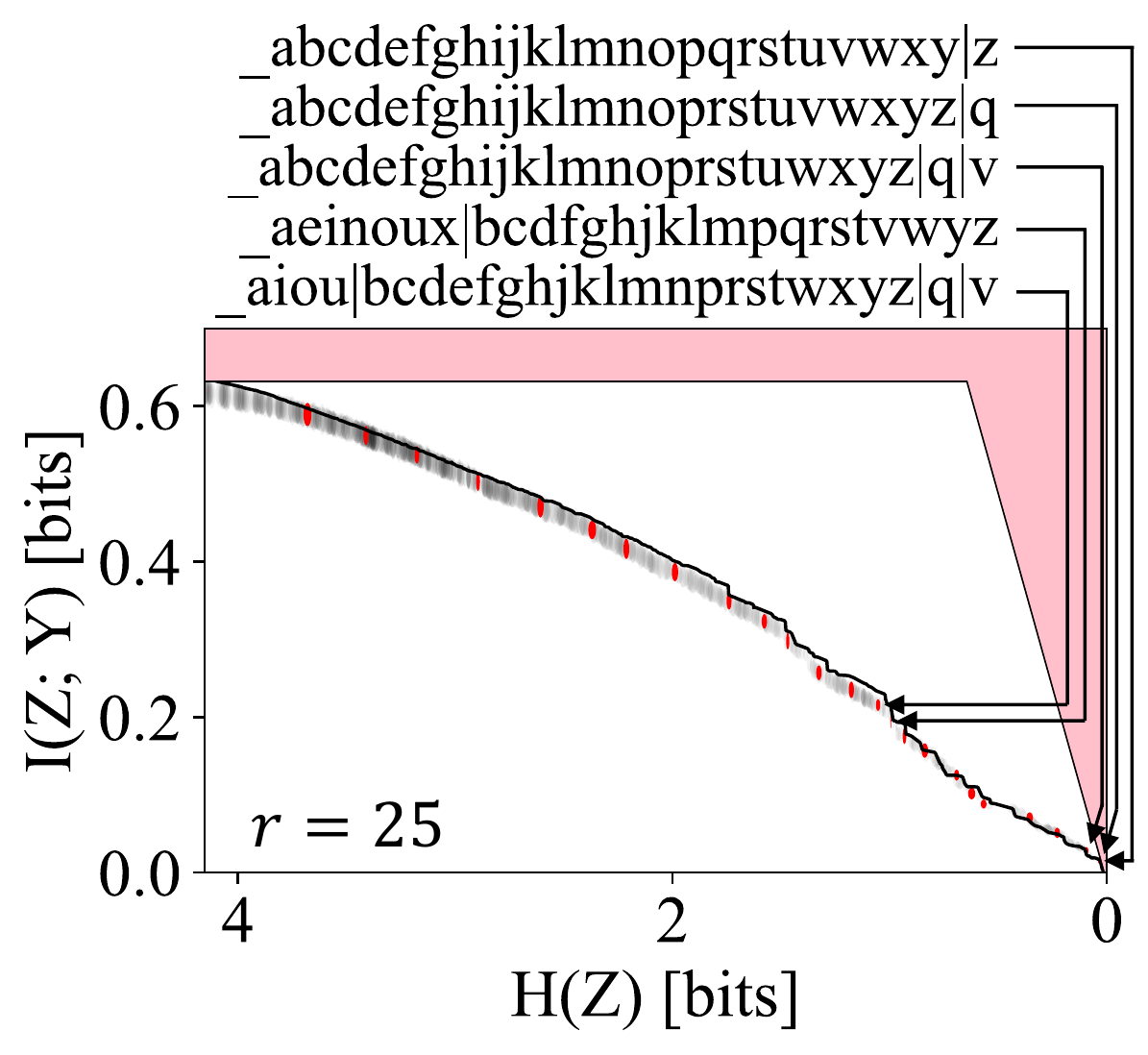}
                \caption{\label{fig:robust-maps-r25}}
            \end{subfigure}}
           \caption{The optimal frontier discovered by the Robust Pareto Mapper at various sampling ratios. 
           The points corresponding to robust clusterings selected by the algorithm are highlighted in red, with the rest in gray. 
           The true frontier is shown in solid black.
           }
           \label{fig:robust-maps}
    \end{figure}
    Finally, we would like to comment on choosing the parameter \(\epsilon\) in Algorithm~\ref{alg:mapper} and Algorithm~\ref{alg:robust-mapper} when working with limited sample sizes.
    The uncertainty in the frontier due to finite sampling effects naturally sets a scale for choosing \(\epsilon\).
    Ideally, we want the two length scales---that given by \(\epsilon\), and that due to the variance in the estimators---to be comparable.
    This ensures that we are not wasting resources fitting sampling noise.
    Evaluating the performance as a function of sample size and epsilon, we see that often, sample size is the limiting factor even up to significant sampling ratios, and often, a small \(\epsilon\) is often sufficient.
    This is demonstrated in Figure~\ref{fig:performance-sweeps}, where it can be seen that performance is good even with small \(\epsilon\), and increasing \(\epsilon\) does not result in a more accurate frontier until the sampling ratio is greater than \(r \approx 5 \times 10^4\).
    In practice, determining the appropriate \(\epsilon\) can be accomplished by selecting different holdout sets, and running the algorithm at a given \(\epsilon\) in each case;
    when \(\epsilon\) is chosen appropriately, the resulting Pareto frontier should not vary significantly between the runs.

    \begin{figure}[H] 
        \centering
         {\captionsetup{position=bottom,justification=centering}    \begin{subfigure}[b]{0.48\textwidth}
                \includegraphics[height=4.5cm]{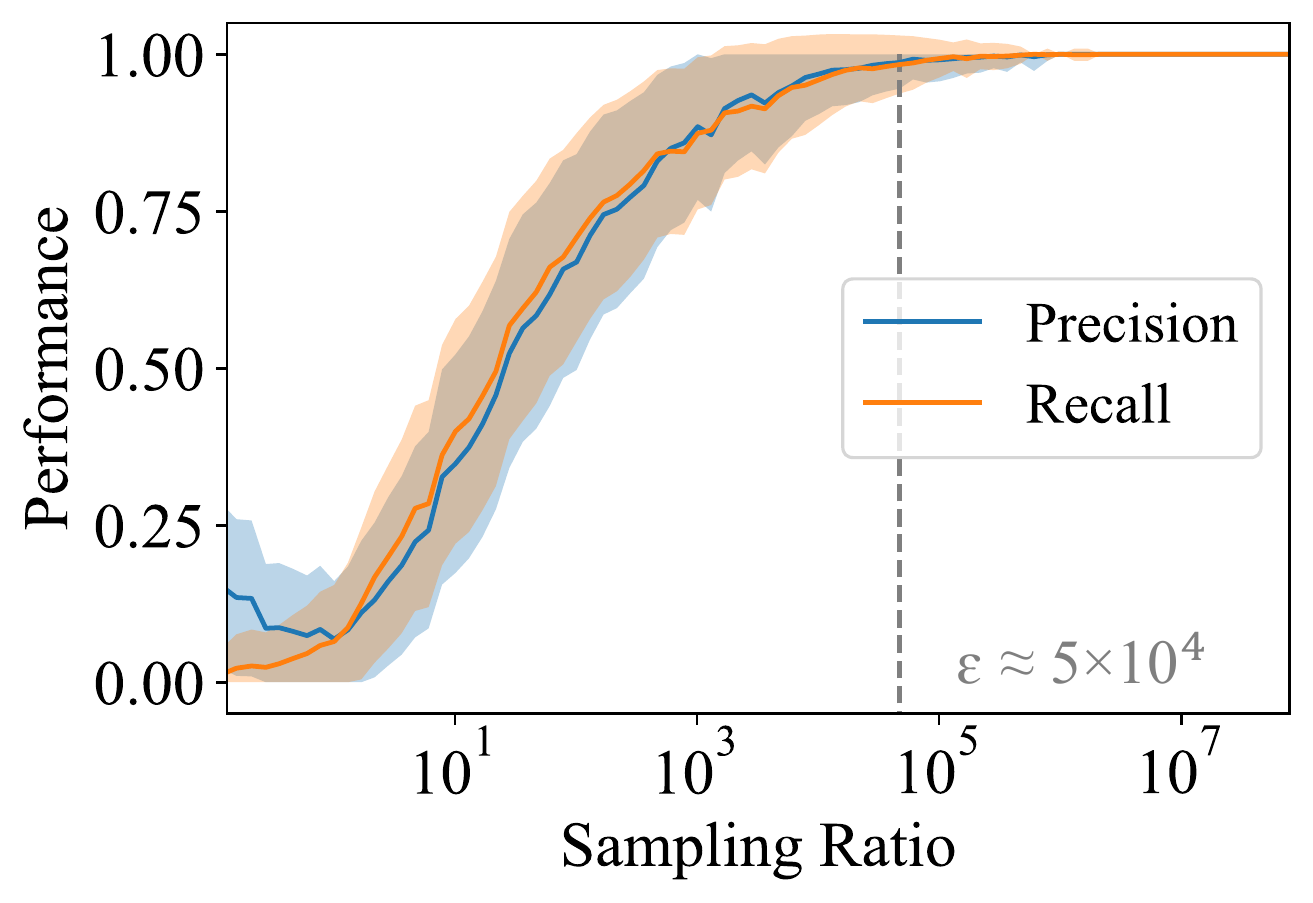}
                \caption{\label{fig:performance-ssweep}}
            \end{subfigure}
            ~
            \begin{subfigure}[b]{0.48\textwidth}
                \includegraphics[height=4.5cm]{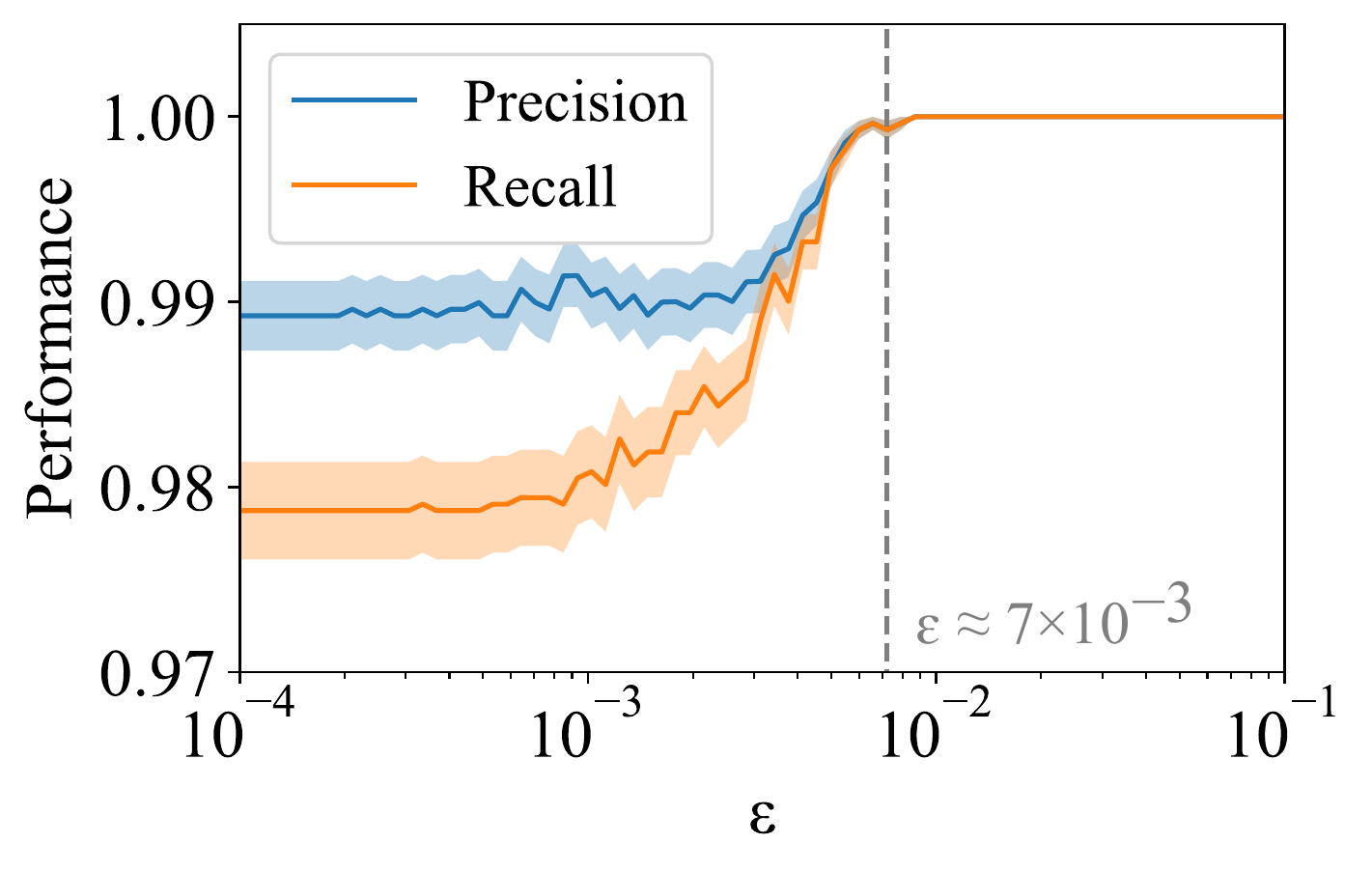}
                \caption{\label{fig:performance-esweep}}
            \end{subfigure}}
           \caption{Performance as a function of (\textbf{a}) sample size, and (\textbf{b}) \(\epsilon\). 
           The precision and recall are measured relative to the true frontier obtained by a brute force search on the true distribution.}
           \label{fig:performance-sweeps}
    \end{figure}

    \subsubsection{Naming the Colors of the Rainbow}
    \label{sec:color-dataset}

    Human languages often have a small set of colors to which words are assigned, and they remarkably often settle on similar linguistic partitions of the color space despite cultural and geographic boundaries \cite{TRBP21}.
    As our next example, we apply our method to the problem of optimally allocating words to colors based on the statistics of natural images.
    In order to cast this as a DIB-style learning problem, we consider the goal of being able to identify objects in natural images based solely on color:
    the variable we would like to predict, \(Y\), is therefore the class of the object (e.g., apple or banana).
    The variable we would like to compress, \(X\), is the average color of the object.
    The Pareto-optimal classifiers are those that, allotted limited memory for colorative adjectives, optimally draw the boundaries to accomplish the task of identifying objects.
    We demonstrate some success in discovering different color classes, relate it to those typically found in natural languages, and discuss shortcomings of our method.

    We create a dataset derived from the COCO dataset \cite{LMBH14}, which provides a corpus of segmented natural images with 91 object classes.
    There are a number of challenges we immediately face in the creation of this dataset, which require us to undertake a number of preprocessing steps.
    Firstly, using standard RGB color values, with 8 bits per channel, leaves over 16 million color classes to cluster, which is not feasible using our technique.
    Secondly, RGB values contain information that is not relevant to the task at hand, as they vary with lighting and image exposure.
    Thus, we turn to the HSV color model and use only the hue value (since hue is a circular quantity, we use circular statistics when discussing means and variances), which we refer to as the color of the object from now on.
    This leaves 256 values which are further reduced by contiguous binning so that each bin has roughly equal probability in order to maximize the entropy of \(X\).
    After this preprocessing, we are left with an input of size \(|X| = 30\).
    Another challenge we face is that there are often cues in addition to average color when performing object identification such as color variations, shape, or contextual understanding of the scene;
    in order to obtain the cleanest results, we retain only those classes that could reasonably be identified by color alone.
    Specifically, for the roughly 800,000 image segments from the approximately 100,000 images we considered in the COCO dataset, we calculate the average color of each segment and keep only the \(40\%\) with the most uniform color as measured by the variance of the hue across the segment;
    then, looking across classes, we keep only those that are relatively uniform on the average color of its instances, keeping approximately the most uniform \(20\%\) of classes.
    We are left with a dataset of approximately 80,000 objects across \(|Y| = 18\) classes, predictably including rather uniformly colored classes such as apples, bananas, and oranges.
    We chose these cutoff percentiles heuristically to maximize the predictive power of our dataset while maintaining a sizable number of examples.

    The Pareto frontier for this dataset is shown in Figure \ref{fig:color-frontier}.
    A number of DMC-optimal points are circled, and their respective color palettes are plotted below in descending order of likelihood.
    First, we note that the overall amount of relevant information is quite low, with a maximum \(I(X; Y) \approx 0.12\), indicating that despite our preprocessing efforts, color is not a strong predictor of object class.
    Unlike the other Pareto frontiers considered, there are a few prominent corners in this frontier, which is a sign that there is no clear number of colors to best resolve the spectrum.
    For the first few DMC-optimal clusterings, the colors fall broadly into reddish-purples, greens, and blues.
    This is somewhat consistent with the observation that languages with limited major color terms often settle for one describing warm colors and one describing colder colors \cite{TRBP21}.
    
    \begin{figure}[H]
        \centering
        \includegraphics[width=13.5cm]{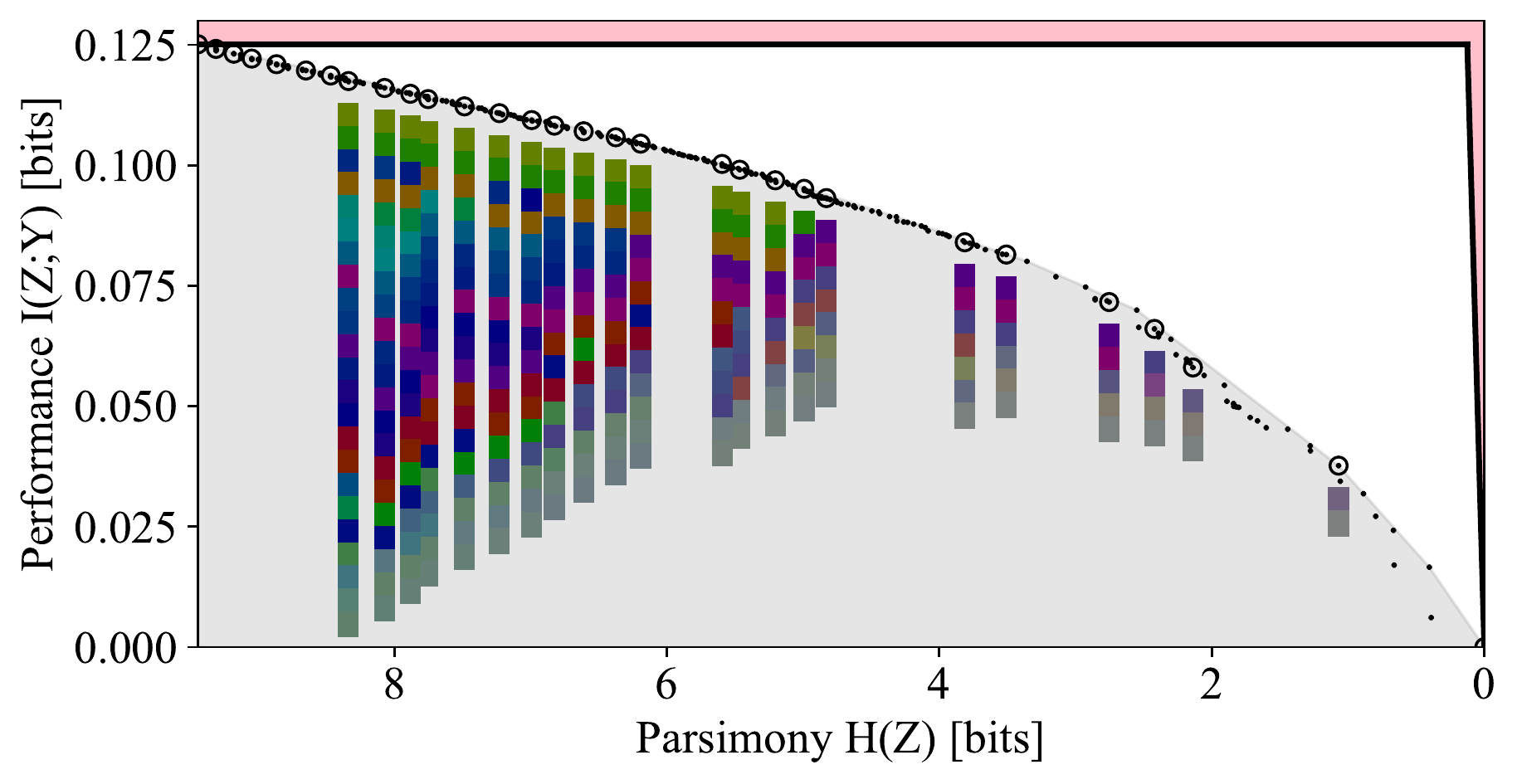}
        \caption{Pareto frontier of color data. 
        A representative color patch for each cluster is shown below select points sorted by likelihood. 
        The saturation of the patch represents the likelihood-weighted variance of the colors mapped to the class.}
        \label{fig:color-frontier}
    \end{figure}
 
    Overall, the results are not conclusive.
    We will address a few issues with our method and discuss how it might be improved.
    Firstly, as noted by \cite{TRBP21}, the colors present in human languages often reflect a communicative need and therefore should be expected to depend strongly on both the statistics of the images considered and also the prediction task at hand.
    Since the COCO dataset was not designed for the purpose of learning colors, classes had color outliers, despite our preprocessing efforts, which reduced the classification accuracy by color alone.
    Using color as a predictor of the variety of an apple or as a predictor of the ripeness of a banana might yield better results (see Figure \ref{fig:color-outliers}); indeed, these tasks might be more reflective of the communicative requirements under which some human languages developed \cite{TRBP21}.
    Due to the scarcity of relevant datasets, we have not attempted to address these subtleties.
   \begin{figure}[H]
        \centering
     {\captionsetup{position=bottom,justification=centering}    \begin{subfigure}[b]{0.22\textwidth}
            \centering
            \includegraphics[height=3.0cm]{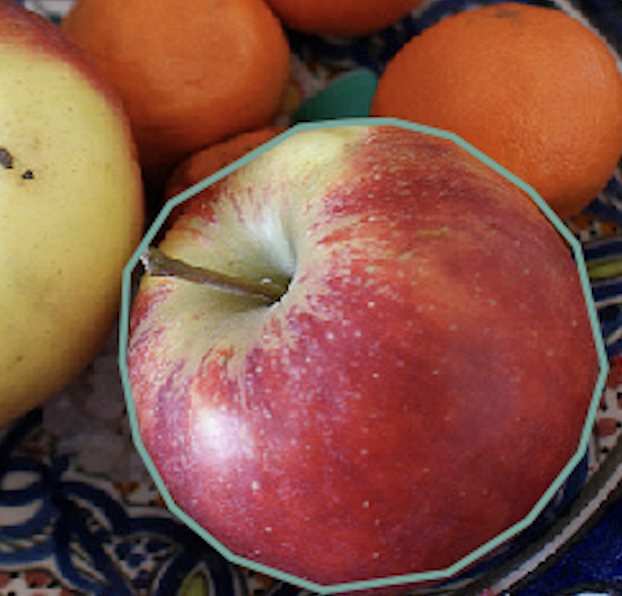}
            \caption{\vspace{0.2cm}}
        \end{subfigure}%
        ~ 
        \begin{subfigure}[b]{0.22\textwidth}
            \centering
            \includegraphics[height=3.0cm]{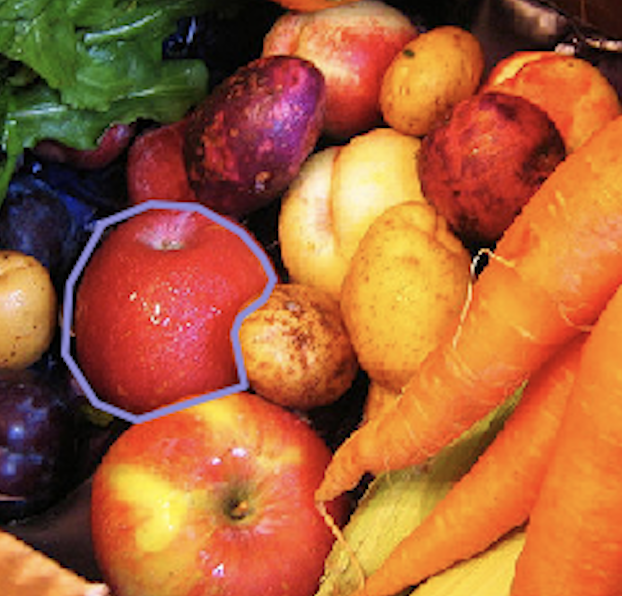}
            \caption{\vspace{0.2cm}}
        \end{subfigure}
        ~ 
        \begin{subfigure}[b]{0.22\textwidth}
            \centering
            \includegraphics[height=3.0cm]{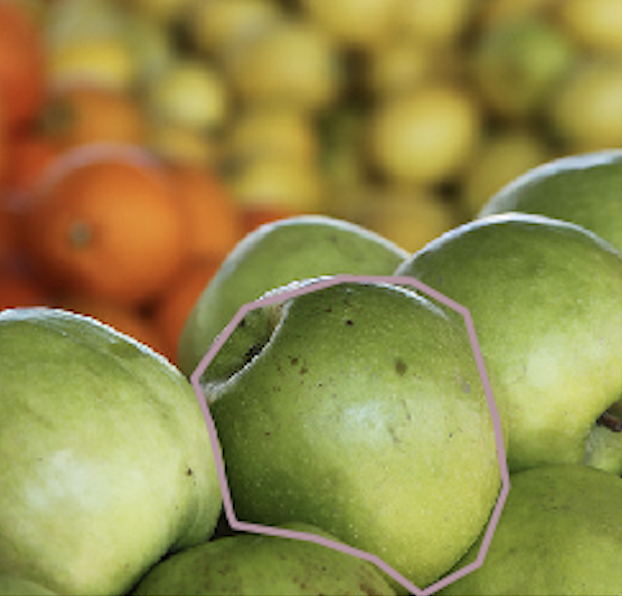}
            \caption{\vspace{0.2cm}}
        \end{subfigure}
         ~ 
        \begin{subfigure}[b]{0.22\textwidth}
            \centering
            \includegraphics[height=3.0cm]{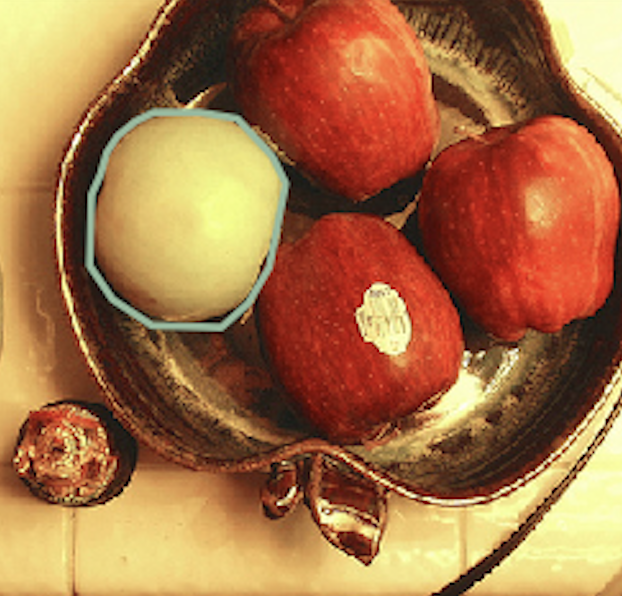}
            \caption{\vspace{0.2cm}}
        \end{subfigure}
        ~
        \begin{subfigure}[b]{0.22\textwidth}
            \centering
            \includegraphics[height=3.0cm]{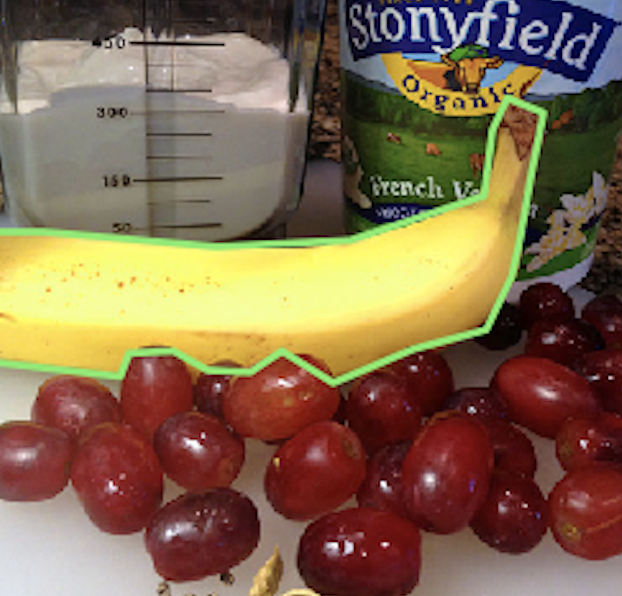}
            \caption{\vspace{0.2cm}}
        \end{subfigure}%
        ~ 
        \begin{subfigure}[b]{0.22\textwidth}
            \centering
            \includegraphics[height=3.0cm]{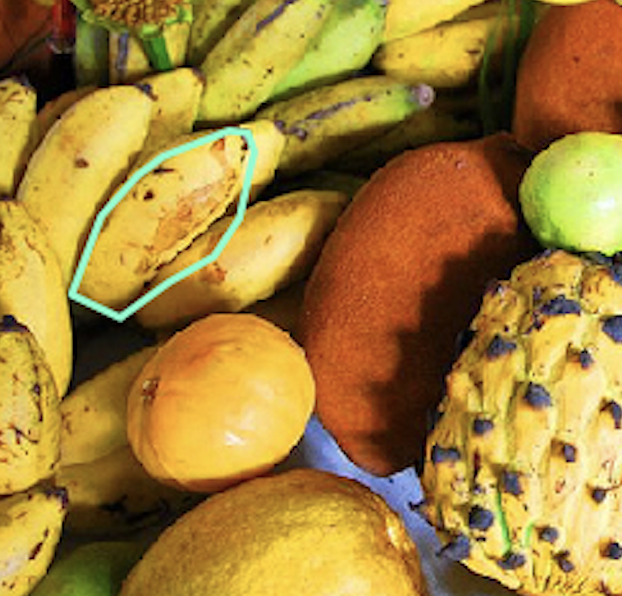}
            \caption{\vspace{0.2cm}}
        \end{subfigure}
        ~ 
        \begin{subfigure}[b]{0.22\textwidth}
            \centering
            \includegraphics[height=3.0cm]{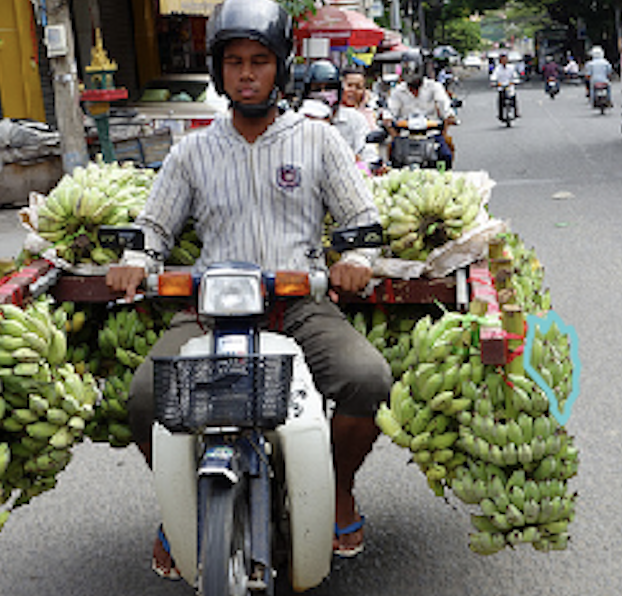}
            \caption{\vspace{0.2cm}}
        \end{subfigure}
         ~ 
        \begin{subfigure}[b]{0.22\textwidth}
            \centering
            \includegraphics[height=3.0cm]{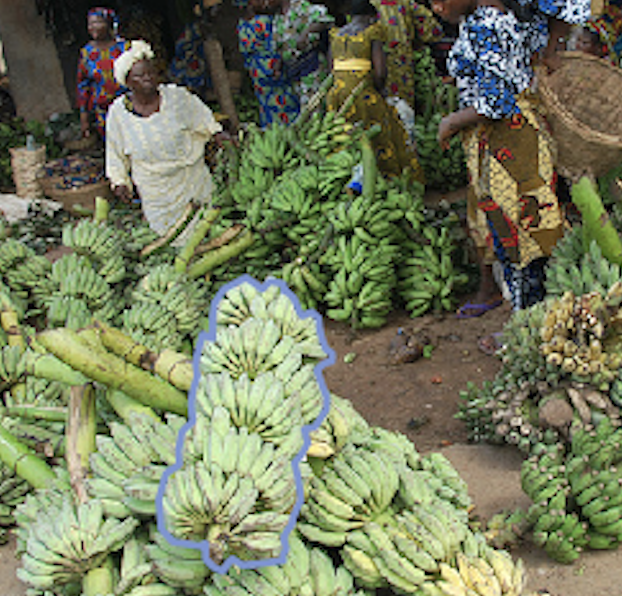}
            \caption{\vspace{0.2cm}}
        \end{subfigure}}
        \caption{Examples of correctly (\textbf{a},\textbf{b}) and incorrectly (\textbf{c},\textbf{d}) identified apples; and correctly (\textbf{e},\textbf{f}) and incorrectly (\textbf{g},\textbf{h}) identified bananas from the filtered COCO dataset using the best discovered five-bin clustering.}
        \label{fig:color-outliers}
    \end{figure}
    Another issue, more fundamental to the DIB algorithm, is that DIB is not well suited for the compression of domains of a continuous nature.
    The DIB trade-off naturally favors a discrete domain, \(X\), without a measure of similarity between objects in \(X\).
    Unlike the other examples considered, the space of colors is inherently continuous: there is some notion of similarity between different hues.
    One weaknesses of the DIB trade-off is that it does not respect this natural notion of closeness and it is as likely to map distant hues together as it is ones that are close together.
    This is undesirable in the case of the color dataset, as we would ideally like to map contiguous portions of the color space to the same output.
    Other objectives, such as the IB or a multidimensional generalization of \cite{TW19}, may be more suitable in cases where the domain is of a continuous nature.

    \subsubsection{Symmetric Compression of Groups}
    \label{sec:group-dataset}

    For our final example, we turn our attention to a group--theoretic toy example illustrating a variation on the compression algorithm so far considered which we call ``symmetric compression.''
    We consider a triplet of random variables \((X_1, X_2, Y)\), each taking on values in the set \(G\) with the special property that \(G\) forms a group under the binary group operation `\(\cdot\)'.
    We could apply Algorithm~\ref{alg:mapper} directly to this problem by setting \(X = (X_1, X_2)\), but this is not ideal, as it does not make use of the structure we know the data to have and as a result needlessly expands our search space.
    Instead, we make the slight modification, detailed in Appendix~\ref{app:symmetric-mapper}, where we apply the same clustering to both inputs, \(Z = (f(A), f(B))\).
    We would like to discover an encoding \(f\) that trades off the entropy of the encoding with the ability to predict \(Y\) from \((f(X_1), f(X_2))\).
    We expect that the DIB frontier encodes information about the subgroups of the group \(G\), but we also expect to find points on the frontier corresponding to near-subgroups of \(G\).

    We consider two distributions. 
    The first consists of the sixteen integers that are co-prime to 40, i.e., $\{1, 3, 7, 9, 11, 13, 17, 19, 21, 23, 27, 29, 31, 33, 37, 39\}$, 
    which for a multiplicative group module 40 denoted \((\mathbb{Z}/40\mathbb{Z})^\times\).
    The second is the Pauli group \(G_1\), whose elements are the sixteen $2\times 2$ matrices generated by the Pauli matrices under matrix multiplication: they are the identity matrix $I$ and Pauli matrices $X,Y,Z$, each multiplied by $\pm 1$ and $\pm i$.
    These groups are chosen as they both have order \(16\) but are otherwise quite different;
    notably, \((\mathbb{Z}/40\mathbb{Z})^\times\) is abelian while \(G_1\) is not.
    The joint probability distribution is defined as follows for each group \(G\):
    we take \((X_1, X_2)\) to be distributed uniformly over \(G^2\) and \(Y = X_1 \cdot X_2\).
    The distribution \(p_{X_1 X_2 Y}\) is given as input to the symmetric Pareto Mapper (Algorithm~\ref{alg:symmetric-mapper}).

    The resultant frontiers are shown in Figure~\ref{fig:group-frontier}.
    As expected, the subgroups are readily identified in both cases, as seen the in circled points on the frontier with entropy \(H(Z) = 1\), \(H(Z) = 2\), and \(H(Z) = 3\), corresponding to subgroups of size \(2\), \(4\), and \(8\), respectively.
    In this example, we also see that the clusterings corresponding to the subgroups saturate the feasibility bound of \(I(Z; Y) = H(Z)\), indicating that at these points, all the information captured in \(Z\) is relevant to \(Y\).
    At these points, the encoding effectively identifies a subgroup \(H \le G\) and retains information only about which of the \(| G | / | H |\) cosets an element belongs to; as it retains the identity of the cosets of \(X_1\) and \(X_2\) in \(Z_1\) and \(Z_2\), it is able to identify the coset of the output \(Y\), thereby specifying \(Y\) to \(\log_2 \frac{| G |}{| H |}\) bits.
    These clearly desirable solutions show up prominently in the primal DIB frontier, yet their prominence is not evident on the frontier of the Lagrangian DIB---notably having zero kink angle as defined by \cite{SS17a}.

    In addition to the points corresponding to identified subgroups, a number of intermediary points have also been highlighted showing `near-subgroups', where, allotted a slightly larger entropy budget, the encoder can further split cosets apart in such a way that partial information is retained.
    Interestingly, despite being very different groups, they have identical Pareto frontiers.
    This is because they both have subgroups of the same cardinality, and the entropy and relevant information of these encodings is agnostic to the group theoretic details and concerns itself only with the ability to predict the result of the group operation.
    \begin{figure}[H]       
        \centering
          {\captionsetup{position=bottom,justification=centering}   \begin{subfigure}[b]{0.5\textwidth}
                \centering
                \includegraphics[width=7.5cm]{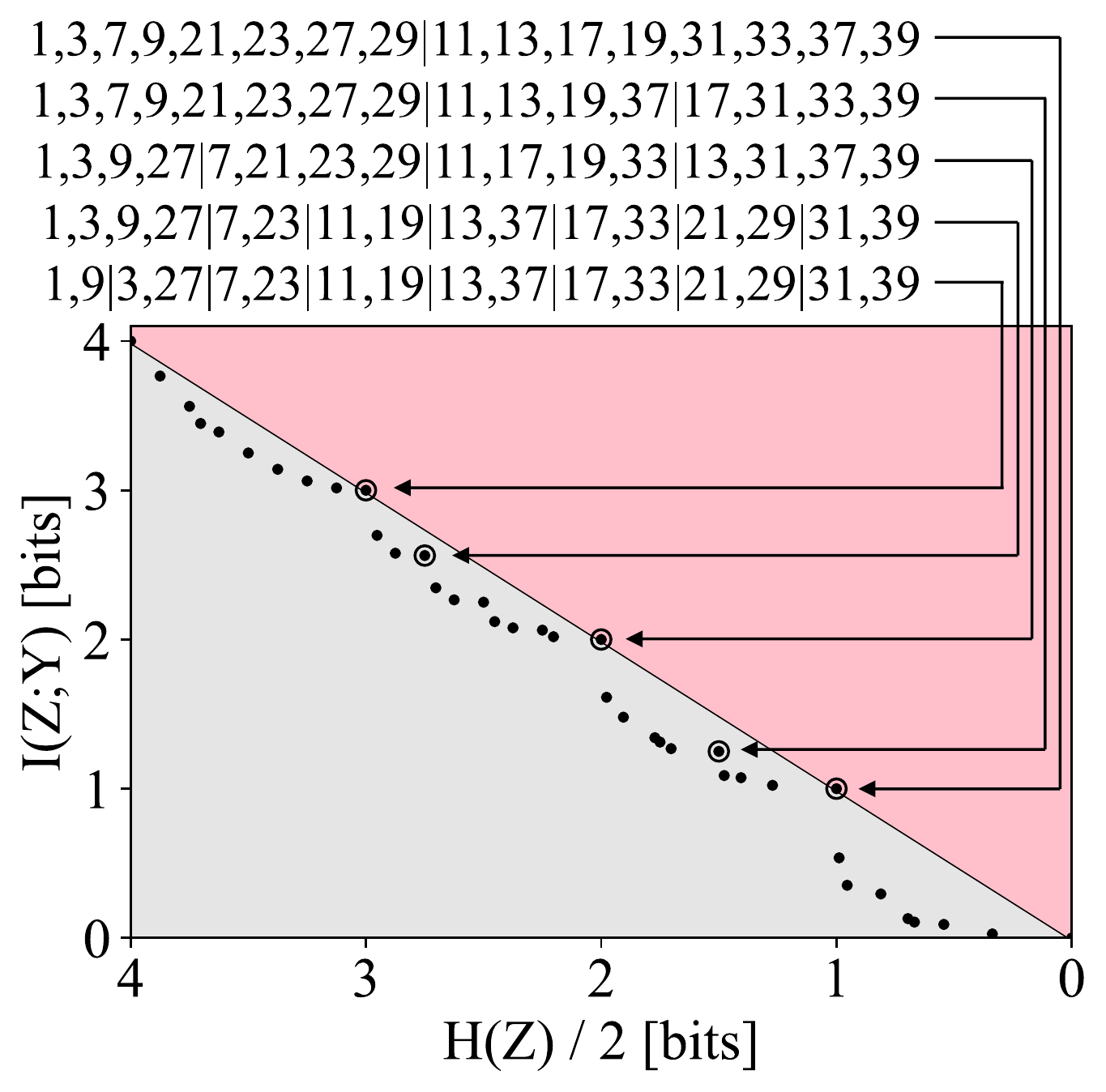}
                \caption{}
            \end{subfigure}%
            ~ 
            \begin{subfigure}[b]{0.5\textwidth}
                \centering
                \includegraphics[width=7.5cm]{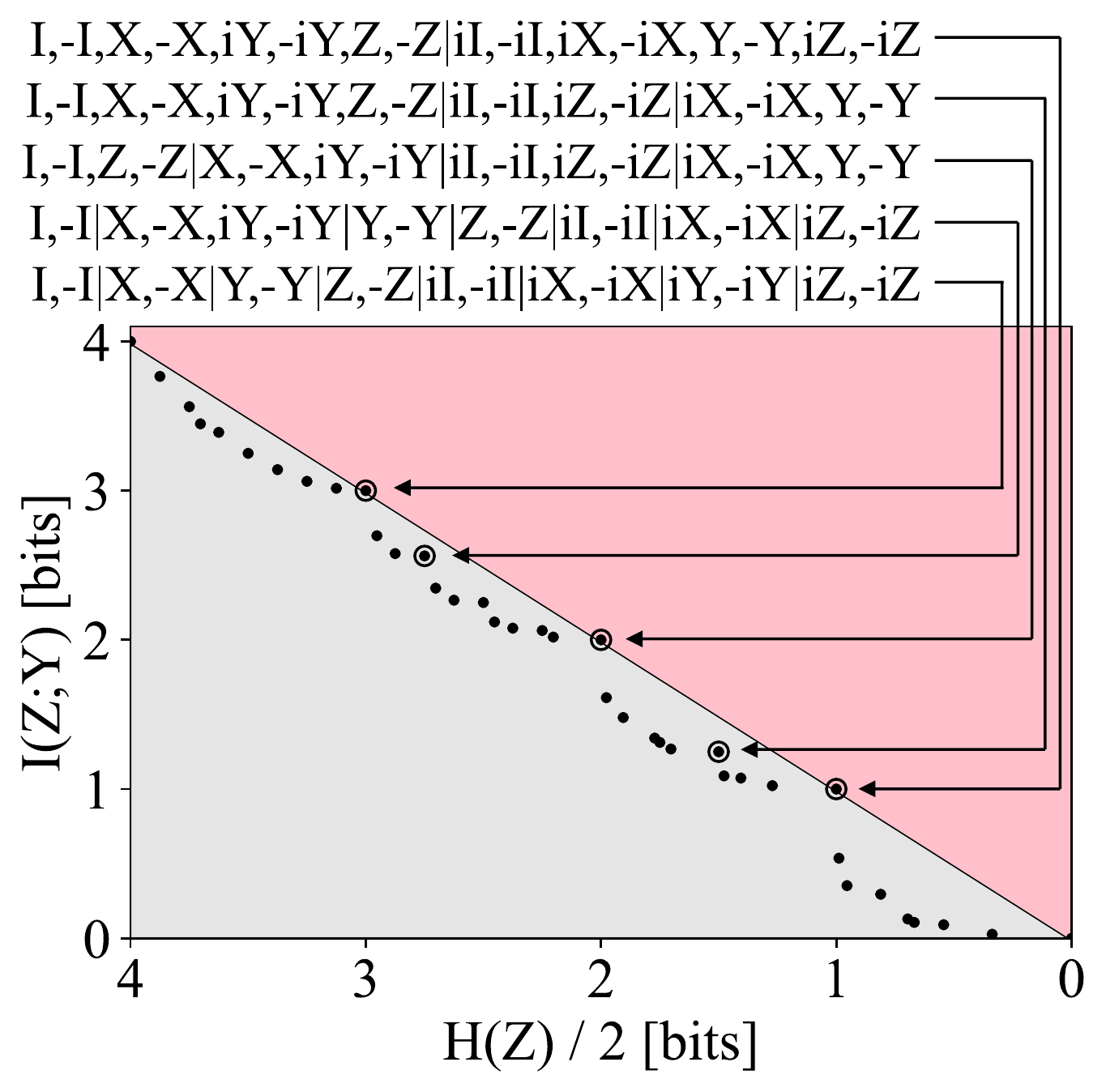}
                \caption{}
            \end{subfigure}}
            \caption{Discovered frontier of the (\textbf{a}) \(\left(\mathbb{Z}/40\mathbb{Z}\right)^\times\) group and (\textbf{b}) the non-abelian Pauli group. 
            Both groups have identical frontiers despite having different group structures.
            }
            \label{fig:group-frontier}
    \end{figure} 


\section{Discussion}
    \label{sec:discussion}
    We have presented the Pareto Mapper algorithm, which computes the optimal trade-off between parsimony and performance for lossy data compression. 
    By applying it to examples involving linguistics, image colors and group theory,
    we have demonstrated the richness of the DIB Pareto frontier that customarily lies hidden beneath the convex hull.
    Our English alphabet example revealed features at multiple scales and examples of what the frontier structure reveals about the data,  and we demonstrated a modification to our algorithm that can aid model selection given significant sampling noise.
    Notably, we showed how the prominence of a point on the primal frontier can be a sharper tool for model selection than existing measures on the Lagrangian DIB frontier; for example, for our group theory examples, it outperformed the kink-angle method for model selection, which only gave kink angles of zero.
    Our datasets and implementation of the presented methods are freely available on GitHub ({\url{https://github.com/andrewktan/pareto_dib}}).

    Our result helps shed light on recently observed phases transitions.
    Recent work has shown that learning phase transitions can occur when optimizing two-objective trade-offs including the (D)IB \cite{AS18, WF20, WFCT20, NS21} and \(\beta\)-VAEs \cite{RV18}.
    In these cases, it is found that the performance of the learned representation makes discontinuous jumps as the trade-off parameter \(\beta\) is varied continuously.
    Such phase transitions can be readily understood in terms of the primal Pareto frontier of the trade-off:
    methods that optimize the Lagrangian DIB are only able to capture solutions on the convex hull of objective plane;
    as the Pareto frontier is largely convex, methods that optimize the Lagrangian exhibit will discontinuous jumps when the trade-off parameter \(\beta\) (which corresponds to the slope of a tangent to the frontier) is varied. 
    This is analogous to the way first-order phase transitions in statistical physics arise, where it is the closely related Legendre--Fenchel dual that is minimized.

    We would like to emphasize that, going beyond the IB framework, our basic method (Section~\ref{sec:pareto-mapper}) is generally applicable to a large class of two-objective optimization problems, including general clustering problems.
    Specifically, our method can be adapted for two-objective trade-offs with the following properties: a discrete search space; a frontier that, for typical datasets, grows polynomially with the input size \(|X|\); and a notion of relatedness between objects in the search space (e.g., for the DIB problem, new encodings can be derived from existing ones by merging its output classes), which allows for an agglomerative search.
    The modification (Section~\ref{sec:robust-pareto-mapper}) can also be adapted given suitable estimators for other two-objective trade-offs.

    \subsection*{Outlook}
    There are many opportunities to further improve our results both conceptually and practically.
    To overcome the limitations we highlighted with our image color dataset, it will be interesting to generalize our work and \cite{TW19} to compressing continuous variables potentially with trade-offs such as the IB.
    While our evidence for the polynomial scaling of the size of the Pareto frontier is likewise applicable to other trade-offs of this sort, the runtime of our algorithm depends heavily on how quickly the search space can be pruned away and therefore is not guaranteed to be polynomial.
    Here, there is ample opportunity to tighten our analysis of the algorithmic complexity of finding the DIB frontier and on the scaling of generic Pareto frontiers.

    Proofs aside, it will also be interesting to optimize the algorithm runtime beyond simply showing that it is polynomial.
    Although we have demonstrated the polynomial scaling of our algorithm for realistic datasets, the polynomial is of a high degree for our implementation, placing limits of \(|X| \le 50\) in practice.
    There are fundamental lower bounds on the runtime set by the scaling of the Pareto set, which we have shown in Figure~\ref{fig4}b to be approximately \(O(n^{2.1})\) for realistic datasets; 
    however, there is likely to be some room for reducing the runtime by sampling clusterings from a better distribution.
    Another opportunity for improvement is increasing the speed at which a given point can be evaluated on the objective plane, which is evidenced by the gap between the runtime, approximately \(O(n^{5.0})\), and the number of points searched, \(O(n^{3.0})\) (Figure~\ref{fig:scaling-random-data}a).
    
    While our method is only applicable to trade-offs over discrete search spaces, the Pareto frontier over continuous search spaces can also fail to be (strictly) concave.
    For example, the inability for the Lagrangian formulation of the (D)IB to explore all points on the trade-off has previously been studied in \cite{KTV18}.
    They propose a modification to the (D)IB Lagrangian that allows for the exploration of parts of the frontier that are not strictly concave.
    An interesting direction for future work is to study whether a similar modification to the Lagrangian can be used to discover the convex portions of similar trade-offs, including those over discrete spaces.
    Another direction for future work is to compare the primal DIB frontier with solutions to the IB;
    while solutions to the DIB Lagrangian often perform well on IB plane \cite{SS17b}, it is an open question whether the solutions to the primal DIB perform favorably.
    Finally, as pointed out by a helpful reviewer, the dual problem corresponding to the primal problem of Equation~\eqref{eq:primalopt}, being a convex optimization problem, is also an interesting direction for future study.
    
    We would also like to note that Pareto-pruned agglomerative search is a generic strategy for mapping the Pareto frontiers of two-objective optimization problems over discrete domains.
    The Pareto Mapper algorithm can also be extended to work in multi-objective settings given an appropriate implementation Pareto set in higher dimensions.
    We conjecture that the poly-logarithmic scaling of the Pareto set holds in higher dimensions as well.
    Extending this work to multi-objective optimization problems is another interesting direction for future work.
    
    In summary, multi-objective optimization problems over discrete search spaces arise naturally in many fields from machine learning \cite{SB04,TZ15,AFDM16,SS17a,SS17b,CS18,SBDA19} to thermodynamics \cite{S20} and neuroscience \cite{BM10}.
    There will therefore be a multitude of interesting use cases for further improved techniques that map these Pareto frontiers in full detail, including concave parts that reveal interesting structure in the data.

\vspace{6pt}

\section{Acknowledgements}
    We thank Ian Fischer for helpful discussions. 
    We would also like to thank the anonymous reviewers for helpful clarifications and suggestions for future work.
    This work was supported by the Institute for Artificial Intelligence and Fundamental Interactions (IAIFI) through NSF Grant No. PHY-2019786 and the Natural Sciences, the Engineering Research Council of Canada (NSERC) [PGSD3-545841-2020], the Casey and Family Foundation, the Foundational Questions Institute and the Rothberg Family Fund for Cognitive Science.



\bibliographystyle{unsrt}
\bibliography{refs}

\begin{thebibliography}{10}

\bibitem{CT06}
T.M. Cover and J.A. Thomas.
\newblock {\em Elements of Information Theory}.
\newblock Wiley, 2006.

\bibitem{TPB00}
Naftali Tishby, Fernando~C Pereira, and William Bialek.
\newblock {The information bottleneck method}.
\newblock {\em arXiv}, 2000.

\bibitem{SS17b}
DJ~Strouse and David~J Schwab.
\newblock {The deterministic information bottleneck}.
\newblock {\em Neural Computation}, 29(6):1611--1630, 2017.

\bibitem{AF18}
Alexander~A Alemi and Ian Fischer.
\newblock Therml: thermodynamics of machine learning.
\newblock {\em arXiv preprint arXiv:1807.04162}, 2018.

\bibitem{F20}
Ian Fischer.
\newblock {The conditional entropy bottleneck}.
\newblock {\em Entropy}, 22(9):999, 2020.

\bibitem{HWD17}
S.~Hassanpour, D.~Wuebben, and A.~Dekorsy.
\newblock Overview and investigation of algorithms for the information
  bottleneck method.
\newblock In {\em SCC 2017; 11th International ITG Conference on Systems,
  Communications and Coding}, pages 1--6, Feb 2017.

\bibitem{PTL94}
Fernando Pereira, Naftali Tishby, and Lillian Lee.
\newblock Distributional clustering of english words.
\newblock In {\em Proceedings of the 31st annual meeting on Association for
  Computational Linguistics}, pages 183--190, 1993.

\bibitem{ST99}
Noam Slonim and Naftali Tishby.
\newblock Agglomerative information bottleneck.
\newblock In Sara~A. Solla, Todd~K. Leen, and Klaus{-}Robert M{\"{u}}ller,
  editors, {\em Advances in Neural Information Processing Systems 12, {[NIPS}
  Conference, Denver, Colorado, USA, November 29 - December 4, 1999]}, pages
  617--623. The {MIT} Press, 1999.

\bibitem{BMDG05}
Arindam Banerjee, Srujana Merugu, Inderjit~S Dhillon, Joydeep Ghosh, and John
  Lafferty.
\newblock Clustering with bregman divergences.
\newblock {\em Journal of machine learning research}, 6(10), 2005.

\bibitem{AFDM16}
Alexander~A. Alemi, Ian Fischer, Joshua~V. Dillon, and Kevin Murphy.
\newblock Deep variational information bottleneck.
\newblock In {\em 5th International Conference on Learning Representations,
  {ICLR} 2017, Toulon, France, April 24-26, 2017, Conference Track
  Proceedings}. OpenReview.net, 2017.

\bibitem{ATMS04}
Periklis Andritsos, Panayiotis Tsaparas, Renée~J. Miller, and Kenneth~C.
  Sevcik.
\newblock {LIMBO: scalable clustering of categorical data}.
\newblock In {\em Advances in Database Technology - EDBT 2004, 9th
  International Conference on Extending Database Technology}, pages 123--146,
  Heraklion, Crete, Greece, 2004.

\bibitem{KTV18}
Artemy Kolchinsky, Brendan~D Tracey, and Steven Van~Kuyk.
\newblock Caveats for information bottleneck in deterministic scenarios.
\newblock In {\em International Conference on Learning Representations}, 2018.

\bibitem{TW19}
Max Tegmark and Tailin Wu.
\newblock Pareto-optimal data compression for binary classification tasks.
\newblock {\em Entropy}, 22(1), 2019.

\bibitem{KY14}
B.~M. Kurkoski and H.~Yagi.
\newblock Quantization of binary-input discrete memoryless channels.
\newblock {\em IEEE Transactions on Information Theory}, 60(8):4544--4552, Aug
  2014.

\bibitem{ZK16}
J.~A. {Zhang} and B.~M. {Kurkoski}.
\newblock Low-complexity quantization of discrete memoryless channels.
\newblock In {\em 2016 International Symposium on Information Theory and Its
  Applications (ISITA)}, pages 448--452, Oct 2016.

\bibitem{NSFC20}
Aviv Navon, Aviv Shamsian, Ethan Fetaya, and Gal Chechik.
\newblock Learning the pareto front with hypernetworks.
\newblock In {\em International Conference on Learning Representations}, 2020.

\bibitem{LYZK20}
Xi~Lin, Zhiyuan Yang, Qingfu Zhang, and Sam Kwong.
\newblock Controllable pareto multi-task learning.
\newblock {\em arXiv preprint arXiv:2010.06313}, 2020.

\bibitem{SS17a}
DJ~Strouse and David~J Schwab.
\newblock The information bottleneck and geometric clustering.
\newblock {\em Neural Computation}, 31(3):596--612, 2019.

\bibitem{SB04}
Susanne Still and William Bialek.
\newblock {How many clusters? An information-theoretic perspective}.
\newblock {\em Neural Computation}, 16(12):2483--2506, 2004.

\bibitem{ACKS15}
Pranjal Awasthi, Moses Charikar, Ravishankar Krishnaswamy, and Ali~K Sinop.
\newblock The hardness of approximation of euclidean k-means.
\newblock In {\em 31st International Symposium on Computational Geometry (SoCG
  2015)}, 2015.

\bibitem{NSB02}
Ilya Nemenman, F.~Shafee, and William Bialek.
\newblock Entropy and inference, revisited.
\newblock In T.~Dietterich, S.~Becker, and Z.~Ghahramani, editors, {\em
  Advances in Neural Information Processing Systems}, volume~14, pages
  471--478. MIT Press, 2002.

\bibitem{P03}
Liam Paninski.
\newblock Estimation of entropy and mutual information.
\newblock {\em Neural Computation}, 15(6):1191--1253, 2003.

\bibitem{KSG04}
Alexander Kraskov, Harald St\"ogbauer, and Peter Grassberger.
\newblock Estimating mutual information.
\newblock {\em Phys. Rev. E}, 69:066138, Jun 2004.

\bibitem{NBR04}
Ilya Nemenman, William Bialek, and Rob de~Ruyter~van Steveninck.
\newblock Entropy and information in neural spike trains: progress on the
  sampling problem.
\newblock {\em Phys. Rev. E}, 69:056111, May 2004.

\bibitem{POVA19}
Ben Poole, Sherjil Ozair, Aaron Van Den~Oord, Alex Alemi, and George Tucker.
\newblock On variational bounds of mutual information.
\newblock In {\em International Conference on Machine Learning}, pages
  5171--5180. PMLR, 2019.

\bibitem{TRBP21}
Colin~R. Twomey, Gareth Roberts, David~H. Brainard, and Joshua~B. Plotkin.
\newblock What we talk about when we talk about colors.
\newblock {\em Proceedings of the National Academy of Sciences}, 118(39), 2021.

\bibitem{LMBH14}
Tsung-Yi Lin, Michael Maire, Serge Belongie, James Hays, Pietro Perona, Deva
  Ramanan, Piotr Doll{\'a}r, and C~Lawrence Zitnick.
\newblock Microsoft coco: common objects in context.
\newblock In {\em European Conference on Computer Vision}, pages 740--755.
  Springer, 2014.

\bibitem{AS18}
Alessandro Achille and Stefano Soatto.
\newblock Emergence of invariance and disentanglement in deep representations.
\newblock {\em The Journal of Machine Learning Research}, 19(1):1947--1980,
  2018.

\bibitem{WF20}
Tailin Wu and Ian~S. Fischer.
\newblock Phase transitions for the information bottleneck in representation
  learning.
\newblock In {\em 8th International Conference on Learning Representations,
  {ICLR} 2020, Addis Ababa, Ethiopia, April 26-30, 2020}. OpenReview.net, 2020.

\bibitem{WFCT20}
Tailin Wu, Ian Fischer, Isaac~L Chuang, and Max Tegmark.
\newblock Learnability for the information bottleneck.
\newblock In {\em Uncertainty in Artificial Intelligence}, pages 1050--1060.
  PMLR, 2020.

\bibitem{NS21}
Vudtiwat Ngampruetikorn and David~J Schwab.
\newblock Perturbation theory for the information bottleneck.
\newblock {\em Advances in Neural Information Processing Systems}, 34, 2021.

\bibitem{RV18}
Danilo~Jimenez Rezende and Fabio Viola.
\newblock Taming vaes.
\newblock {\em arXiv preprint arXiv:1810.00597}, 2018.

\bibitem{TZ15}
Naftali Tishby and Noga Zaslavsky.
\newblock Deep learning and the information bottleneck principle.
\newblock In {\em 2015 IEEE Information Theory Workshop (ITW)}, pages 1--5.
  IEEE, 2015.

\bibitem{CS18}
Pratik Chaudhari and Stefano Soatto.
\newblock Stochastic gradient descent performs variational inference, converges
  to limit cycles for deep networks.
\newblock In {\em 2018 Information Theory and Applications Workshop (ITA)},
  pages 1--10. IEEE, 2018.

\bibitem{SBDA19}
Andrew~M Saxe, Yamini Bansal, Joel Dapello, Madhu Advani, Artemy Kolchinsky,
  Brendan~D Tracey, and David~D Cox.
\newblock On the information bottleneck theory of deep learning.
\newblock {\em Journal of Statistical Mechanics: Theory and Experiment},
  2019(12):124020, 2019.

\bibitem{S20}
Susanne Still.
\newblock Thermodynamic cost and benefit of memory.
\newblock {\em Physical Review Letters}, 124(5):050601, 2020.

\bibitem{BM10}
Lars Buesing and Wolfgang Maass.
\newblock A spiking neuron as information bottleneck.
\newblock {\em Neural Computation}, 22(8):1961--1992, 2010.

\end{thebibliography}


\clearpage
\appendix
\section{Proof of Pareto Set Scaling Theorem}
    \label{app:pareto-size}

    As discussed in Section~\ref{sec:pareto-properties}, the performance of our algorithm depends on the size of the Pareto frontier.
    In the paper, we provide experimental evidence for the polynomial scaling of the DIB Pareto frontier of a variety of datasets.
    In this appendix, we will prove Theorem~\ref{thm:scaling-by-area}, which provides sufficient conditions for the sparsity of the Pareto frontier and apply it to a number of examples.

    As in Section~\ref{sec:pareto-properties}, let \(S = \{(U_i, V_i)\}_{i=1}^N\) be a sample of \(N\) i.i.d. bivariate random variables having joint cumulative distribution \(F_{UV}(u, v)\).
    Further, let \(R_{S, U}(U_i)\) and \(R_{S, V}(V_i)\) be the marginal rank statistics of \(U\) and \(V\), respectively, with respect to \(S\);
    that is, \(U_i\) is the \(R_{S, U}(U_i)\)\textsuperscript{th} smallest \(U\)-value in \(S\) and likewise for \(V\).
    Ties can be broken arbitrarily.
    We will often drop the subscripts on \(R_{S, U}\) and \(R_{S, V}\) when the context is obvious.

    \begin{definition}
        Given a permutation \(\sigma: [N] \rightarrow [N]\) where \([N] \equiv \{1, \ldots, N\}\), we call \(i\) a \textit{sequential minimum} if \(j < i \Rightarrow \sigma(j) > \sigma(i)\).
    \end{definition}

    We would now like to show that the marginal rank statistics \(S\) are sufficient for determining membership in \(\operatorname{Pareto}(S),\) which we formalize in Lemma~\ref{lem:rank-suff}.

    \begin{lemma}
        Let \(\sigma_U(i) = R(U_i)\) and \(\sigma_V(i) = R(V_i)\).
        An element \((U_i, V_i) \in S\) is maximal if and only if its rank, \(i\), is a sequential minimum of \(\sigma_U \circ \sigma_V\).
        \label{lem:rank-suff}
    \end{lemma}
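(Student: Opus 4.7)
Because the maximality condition makes reference only to strict inequalities between the coordinates $U$ and $V$, it is invariant under any monotone transformation of either axis and is determined entirely by the joint rank structure $\{(\sigma_U(i), \sigma_V(i))\}_{i=1}^N$. My plan is to exploit this by translating the geometric condition of maximality into a purely combinatorial statement about the single permutation $\sigma_U \circ \sigma_V$ of $[N]$.

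The heart of the argument is the following translation. The condition $V_k > V_i \Rightarrow U_i > U_k$, quantified over all $k \neq i$, forbids the existence of any other sample lying strictly above $(U_i, V_i)$ in $V$ while weakly dominating it in $U$. Rewriting both halves of this implication in terms of the rank permutations $\sigma_U$ and $\sigma_V$, and then routing them through the composition $\sigma_U \circ \sigma_V$, the existence of a forbidden witness $k$ corresponds exactly to the existence of a position $j < i$ (in the ordering induced by the composition) at which $\sigma_U \circ \sigma_V$ takes a smaller value than at $i$. The absence of any such witness is, by definition, the statement that $i$ is a sequential minimum of $\sigma_U \circ \sigma_V$.

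Both directions of the iff then follow by direct unwinding. For ($\Rightarrow$), I would assume $(U_i, V_i)$ is maximal and show that any $j < i$ with $(\sigma_U \circ \sigma_V)(j) \le (\sigma_U \circ \sigma_V)(i)$ corresponds, after translating back to the $(U, V)$-plane, to a sample above and weakly dominating $(U_i, V_i)$ in the forbidden direction, contradicting maximality. For ($\Leftarrow$), I would reverse the translation: any sample $k$ that witnesses a failure of the maximality implication must, under the translation, appear at a position strictly earlier than $i$ with a smaller composed-rank value, contradicting that $i$ is a sequential minimum.

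The main obstacle is not conceptual but purely bookkeeping: one must verify that under the paper's flipped-axis convention (where $-H$ rather than $H$ is plotted horizontally) the ascending/descending direction of each rank, and the left-to-right direction in the definition of sequential minimum, line up consistently so that no sign error is introduced. Once that correspondence is pinned down, the logical content of the proof reduces to a one-line equivalence of definitions.
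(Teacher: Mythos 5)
Your plan is essentially the paper's proof: the paper likewise argues by directly unwinding the definitions, indexing the points by the rank in one coordinate and showing in both directions that the maximality implication (any point further along one axis must be worse along the other) is exactly the sequential-minimum condition on the composed rank permutation, with the only work being the sign/direction bookkeeping you flag. One caution: the invariance under monotone transformations that you cite as motivation is, in the paper, Corollary~\ref{cor:monotone-transformation} deduced \emph{from} this lemma, so it should not be used as an ingredient—though your actual argument relies only on the rank translation, not on that corollary.
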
 

    \begin{proof}
        (\(\implies\)) 
        Assume \((U_i, V_{\sigma(i)}) \in S\) is maximal. 
        For any other point \((u_j, v_{\sigma(j)}) \in S\), \(i \ne j\), if \(j < i \Rightarrow u_i < u_j,\) then \(v_{\sigma(i)} > v_{\sigma(j)}\) by definition of maximality, which implies \(\sigma(j) > \sigma(i),\) showing that \(i\) is a sequential minimum of \(\sigma\). 

        (\(\impliedby\)) For \((u_i, v_{\sigma(i)}) \in S\) such that \(i\) is a sequential minimum of \(\sigma\). 
        For any other point \((u_j, v_{\sigma(j)}) \in S\), \(i \ne j\), either \(i < j \Rightarrow u_i > u_j\) showing that \((u_i, v_{\sigma(i)})\) is maximal, 
        or \(j > i \Rightarrow \sigma(j) > \sigma(i)\) by definition of a sequential minimum, which implies \(v_{\sigma(i)} > v_{\sigma(j)}\) showing that \((u_i, v_{\sigma(i)})\) is maximal.
    \end{proof}

    \begin{corollary}
        Membership in the Pareto set is invariant under strictly monotonic transformations of \(U\) or \(V\).
        \label{cor:monotone-transformation}
    \end{corollary}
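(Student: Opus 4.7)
My plan is to derive the corollary as an almost immediate consequence of Lemma~\ref{lem:rank-suff}. That lemma characterizes membership in $\operatorname{Pareto}(S)$ purely in terms of the sequential-minimum structure of the permutation $\sigma_U \circ \sigma_V$ induced by the marginal rank statistics $R_{S,U}$ and $R_{S,V}$. In particular, Pareto-optimality depends only on the relative ordering of the $U_i$'s and $V_i$'s within $S$, not on their numerical values. So it suffices to show that strictly monotonic transformations preserve this rank information.

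Concretely, I would proceed as follows. Let $g$ be a strictly increasing function applied to the $U$-coordinate and leave $V$ untouched; then $U_i < U_j \iff g(U_i) < g(U_j)$, so the marginal rank statistics satisfy $R_{g(S),U}(g(U_i)) = R_{S,U}(U_i)$ for every $i$. Consequently $\sigma_U$ is unchanged, hence so is the composite permutation $\sigma_U \circ \sigma_V$, its sequential minima, and (by Lemma~\ref{lem:rank-suff}) the indices of Pareto-optimal points. The symmetric argument handles strictly increasing transformations of $V$, and composing yields the claim for any pair of strictly increasing reparameterizations of the two marginals. For strictly decreasing transformations, the rank statistic along the affected axis is reversed via $R \mapsto N+1-R$, which simultaneously flips the orientation of the Pareto dominance cone along that axis; interpreting Pareto-optimality relative to the ordering of the transformed variable (as one must, since the very notion of ``maximal'' is relative to the coordinate direction), the set of Pareto-optimal indices is again preserved.

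I do not expect a real obstacle: the result is in essence a restatement of the invariance content of Lemma~\ref{lem:rank-suff}. The only bookkeeping subtlety is in the decreasing case, where care is needed to apply the Pareto definition relative to the orientation of the transformed coordinate rather than the original one; once this is observed, the proof reduces to a single appeal to Lemma~\ref{lem:rank-suff}.
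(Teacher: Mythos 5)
Your proposal is correct and is essentially the paper's own argument: the paper's proof is a single line observing that strictly monotonic transformations leave the rank statistics unchanged and then invoking Lemma~\ref{lem:rank-suff}. Your extra care with the strictly decreasing case is reasonable but goes beyond what the paper does (and beyond what is needed, since the corollary is applied only to the strictly increasing marginal CDFs $F_U, F_V$ in the proof of Theorem~\ref{thm:scaling-by-area}).
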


    \begin{proof}
        Strictly monotonic transformations leave the rank statistics unchanged and therefore also do not affect membership in the Pareto set by Lemma~\ref{lem:rank-suff}.
    \end{proof}

    We now turn to the main result of this Appendix: the proof of Theorem~\ref{thm:scaling-by-area}, which is restated here for {convenience}.

    \scalingbyarea*

    \begin{proof}
        Since the marginal CDFs are invertible by assumption and therefore strictly monotonic, Corollary~\ref{cor:monotone-transformation} allows us to consider instead \(U_i' = F_U(U_i)\) and \(V_i' = F_V(V_i)\) with the promise that \(\operatorname{Pareto}(S') = \operatorname{Pareto}(S)\) where \(S' \equiv \{(U_i', V_i'\}\).
        Note that \(F_{U'}(u') = u'\) and \(F_{V'}(v') = v'\), and therefore without loss of generality, we can assume \(F_U\) and \(F_V\) are uniform distributions over the interval \([0, 1]\) dropping the prime notation.
        This allows us to identify the copula with the joint CDF \(C(F_U(u), F_V(v)) = C(u, v) = F(u, v)\).

        Let \(\mathbf{1}_{A}(x)\) denote the indicator function of a set \(A\):
        taking the value \(1\) for \(x \in A\) and \(0\) otherwise.
        Then, \(\E_{S}\left[|\operatorname{Pareto}(S)|\right] = \E_{S} \left[\sum_{i = 1}^N \mathbf{1}_{\operatorname{Pareto}(S)}(U_i, V_i)\right]\).
        Making use of the linearity of expectation and noting that \((U_i, V_i)\) are drawn i.i.d., we can write
        \begin{equation}
            \E_{S}\left[|\operatorname{Pareto}(S)|\right] = N \E_{S} \left[\mathbf{1}_{\operatorname{Pareto}(S)}(U_1, V_1)\right]
        \end{equation}
        Note that \(\E\left[\mathbf{1}_{\operatorname{Pareto}(S)}(u, v)\right] = (1 - \operatorname{Pr}[U > u, V > v])^N = (u + v - C(u, v))^N\), which follows from the definition of Pareto optimality.
        For convenience, we define \(\hat{C}(u, v) \equiv u + v - C(u, v)\) yielding
        \begin{equation}
            \E_{S}\left[|\operatorname{Pareto}(S)|\right] = \int_0^1 \int_0^1 N f(u, v) \hat{C}(u, v)^{N - 1} du dv
        \end{equation}
        Take \(f_{\mathrm{\max}}\) to be the maximum value \(f\) achieves over the domain, we are guaranteed \(f_{\mathrm{max}} < \infty\) as \(C\) is Lipschitz by assumption. 
        Therefore
        \begin{equation}
            \E_{S}\left[|\operatorname{Pareto}(S)|\right] \le N f_{max} \int_0^1 \int_0^1 \hat{C}(u, v)^{N - 1} du dv
        \end{equation}
        Now, define \(\hat{C}_N\), which is equal to \(\hat{C}\) in the region \(R_N\) and \(0\) otherwise.
        We also define the region
        \begin{equation}
            R_N' \equiv \left\{(u, v) \in [0, 1] \times [0, 1] : e^{-\frac{1 + 2 \log N}{N}} \le \hat{C}(u, v) < e^{-\frac{1}{N}}\right\}
        \end{equation}
        We now split the integral over \([0, 1]^2\) into three disjoint parts
        \begin{equation}
            \int_0^1 \int_0^1 \hat{C}(u, v)^{N - 1} du dv
            =
            \int_{R_N} \hat{C}_N(u, v)^{N - 1} du dv 
            + \int_{R_N'} \hat{C}(u, v)^{N - 1} du dv 
            + \int_{[0, 1]^2 \setminus R_N \cup R_N'} \hat{C}(u, v)^{N - 1} du dv 
        \label{eq:truncated-copula-convergence}
        \end{equation}
        The integrand of the final term is bounded by \(e^{-\log (N) + O(1)} = O(N^{-1})\) and \(\lambda([0, 1]^2 \setminus R_N \cup R_N') = \Theta(1)\); therefore, this term goes to \(0\) as \(N \rightarrow \infty\).
        Now, we turn to the middle term on the right-hand side.
        Since \(C\) is \(2\)-non-decreasing and Lipschitz, we have that the measure of the set \(\lambda(R_N') = \Theta\left( e^{-\frac{1}{N}} - e^{-\frac{1 + 2 \log N}{N}} \right) = \Theta\left(\frac{\log N}{N}\right)\),
        \(\hat{C}(u, v) < e^{-\frac{1}{N}}\) in the region \(R_N'\) by definition, and therefore, the second term goes to \(0\) as \(N \rightarrow \infty\).
        Since there is always at least one point on the Pareto frontier, the first term must be \(\Omega(1)\), and the integral is dominated by the portion over \(R_N\).
        Equivalently, 
        \begin{equation}
            \int_0^1 \int_0^1 \hat{C}(u, v)^{N - 1} du dv
            \sim
            \int_0^1 \int_0^1 \hat{C}_N(u, v)^{N - 1} du dv 
        \end{equation}
        Further,
        \begin{equation}
            \int_0^1 \int_0^1 N C_N(u, v)^{N - 1} du dv
            \le 
            N \int_0^1 \int_0^1 \mathbf{1}_{R_N}(u, v) du dv = N \lambda(R_N) = \ell(N)
        \end{equation}
        Following the chain of inequalities and asymptotic equivalences, we arrive at the desired result \(\E_{S}\left[|\operatorname{Pareto}(S)|\right] = \Theta(\ell(N))\).
    \end{proof}

    We now apply Theorem~\ref{thm:scaling-by-area} to a few illustrative examples.

    The Fr\'echet--Hoeffding copulae, \(W\) and \(M\), are extremal in the sense that, written in \(two\) dimensions, any copula \(C\) must satisfy \(W(u, v) \le C(u, v) \le M(u, v)\), \(\forall (u, v) \in [0, 1]^2\);
    where \(W(u, v) = \max(u + v - 1, 0)\) and \(M(u, v) = \min(u, v)\).
    \(W\) and \(M\) correspond to complete negative and positive monotonic dependence, respectively. 

    \begin{example}[Fr\'echet--Hoeffding lower bound]
    First, let us consider the scaling of the Pareto of a distribution with extremal copula \(W(u, v)\).
    In this case, we note that the region \([0, 1]^2 \setminus R_N\) is the triangle with vertices at \(\{(0, 0), (0, e^{-1/N}), (e^{-1/N}, 0)\}\), and therefore \(\lambda(R_N) = 1 - \frac{1}{2} \exp^{-\frac{2}{N}}\).
    For large \(N\), \(\lambda(R_N) = \frac{1}{2} + O(N^{-1})\).
    We see that this satisfies the conditions for Theorem~\ref{thm:scaling-by-area} with \(\ell(N) = N\), giving \(\E_{S}\left[|\operatorname{Pareto}(S)|\right] = \Theta(N)\) as expected for a distribution with complete negative monotonic dependence.
    \end{example}

    \begin{example}[Fr\'echet--Hoeffding upper bound]
    First, let us consider the scaling of the Pareto of a distribution with extremal copula \(M(u, v)\).
    In this case, we note that the region \([0, 1]^2 \setminus R_N\) is the region \([0, e^{-1/N}]\), and therefore, \(\lambda(R_N) = 1 - \exp^{-\frac{2}{N}}\).
    For large \(N\), \(\lambda(R_N) = \frac{2}{N} + O(N^{-2})\).
    We see that this satisfies the conditions for Theorem~\ref{thm:scaling-by-area} with \(\ell(N) = 1\), giving \(\E_{S}\left[|\operatorname{Pareto}(S)|\right] = \Theta(1)\) as expected for a distribution with complete positive monotonic dependence.
    \end{example}

    \begin{example}[Independent random variables]
    Next, let us consider the case of independent random variables with copula \(C(u, v) = uv\).
    Note that the level curves in this case \(u + v - C(u, v) = e^{-\frac{1}{N}}\) are given by \(v = \frac{e^{-\frac{1}{N}} - u}{1 - u}\).
    We can then integrate to find the area of the region \(R_N\)
    \begin{equation}
        \lambda(R_N) = 1 - \int_{0}^{e^{-\frac{1}{N}}} \frac{e^{-\frac{1}{N}} - u}{1 - u} du = e^{-1/n} \left(1 - e^{1/n}\right) \left(\log \left(1-e^{-1/n}\right)-1\right)
    \end{equation}
    Expanding for large \(N\), we find that \(\lambda(R_N) = \frac{\log N}{N} + O(N^{-1})\).
    We see that this satisfies the conditions for Theorem~\ref{thm:scaling-by-area} with \(\ell(N) = \log N\), giving \(\E_{S}\left[|\operatorname{Pareto}(S)|\right] = \Theta(\log N)\).
    \label{ex:independent-rv-scaling}
    \end{example}

    Theorem~\ref{thm:scaling-by-area} provides a useful tool to pin down the scaling of the size of the Pareto set.
    Due to the relatively quick decay of the additional terms in Equation~\eqref{eq:truncated-copula-convergence}, we find that scaling estimates using the region \(R_N\) are quite accurate even for modest \(N\).
    However, its applicability is limited, as it requires that we either have an analytic expression for the copula or are otherwise able to estimate the copula to precision \(1 / N\).
    In particular, we are not able to prove any bounds for the DIB frontier, which is the case \(U = H(Z)\), and \(V = I(Z; Y)\).
    We suspect that for most realistic datasets, including points on the DIB plane, that \(\ell(N) = \operatorname{polylog}(N)\), which implies that the scaling of the Pareto set is likewise \(\Theta(\operatorname{polylog}(N))\).
    Since we are interested in the large \(N\) behavior, we are hopeful that more general results can be found through the study of extreme-value copulas, which we leave for future work.


\section{Auxiliary Functions}
    \label{app:auxiliary-functions}
    In this appendix, we provide the pseudocode for the important auxiliary functions used in Algorithms \ref{alg:mapper} and \ref{alg:robust-mapper}. 
    The Pareto set data structure is a list of point structures.
    A point structure, \(p\), contains fields for both objectives \(p\mathrm{.x}\), \(p\mathrm{.y}\), and optional fields for storing the uncertainties \(p\mathrm{.dx}\), \(p\mathrm{.dy}\) and clustering function \(p\mathrm{.f}\). 
    As a list, the Pareto set \(P\) for point \(p\), and index \(i\), also supports the functions \(\textsc{size}(P)\) returning the number of elements in \(P\), \(\textsc{insert}(p, i, P)\) for inserting point \(p\) at index \(i\), and \(\textsc{remove}(i, P)\) for removing the entry at index \(i\).
    Additionally, since the Pareto set \(P\) is maintained in sorted order by its first index, we can find the correct index at which to insert a new point in logarithmic time:
    for a point \(p\) and Pareto set \(P\), this is written \(\textsc{find\_index}(p.x, P)\) in the pseudocode of Algorithms \ref{alg:is-pareto}--\ref{alg:pareto-distance}.

    \begin{algorithm}[H]
        \caption{Check if a point is Pareto optimal        }
        \label{alg:is-pareto}
        \textit{Input}: Point on objective plane \(p\), and Pareto Set \(P\)
        
        \textit{Output}: \textsc{true} if and only if \(p\) is Pareto optimal in \(P\)
        \begin{algorithmic}[1]
            \Procedure{is\_pareto}{$p, P$}
                \State \(i = \textsc{find\_index}(p\mathrm{.x}, P)\) \Comment{Return correct value to insert \(p\) in \(P\)}

                \Return{\(\textsc{size}(P) = 0\) \textbf{ or } \(i = \textsc{size}(P) \textbf{ or } P\mathrm{[i + 1].y} < p\mathrm{.y}\)}
            \EndProcedure
        \end{algorithmic}
    \end{algorithm}
\vspace{-6pt}
    \begin{algorithm}[H]
        \caption{Add point to Pareto Set        }
        \label{alg:pareto-add}
        \textit{Input}: Point on objective plane \(p\), and Pareto Set \(P\)
        
        \textit{Output}: Updated Pareto Set \(P\)
        \begin{algorithmic}[1]
            \Procedure{pareto\_add}{$p, P$}
                \If{\(\textsc{is\_pareto}(p, P)\)} \Comment{Insert only if Pareto optimal}
                    \State \(i = \textsc{find\_index}(p.x, P)\)

                    \State \(P \leftarrow \textsc{insert}(p, i, P)\) \Comment{Insert point into correct location}

                    \While{\(i < \textsc{size}(P)\) \textbf{ and } \(p\mathrm{.y} > P\mathrm{[i + 1].y}\)} \Comment{Remove dominated points}
                        \State \(\textsc{remove}(i + 1, P)\)
                        \State \(i = i + 1\)
                    \EndWhile
                \EndIf
                \Return{P}
            \EndProcedure
        \end{algorithmic}
    \end{algorithm}
\vspace{-6pt}
    \begin{algorithm}[H]
        \caption{Calculate distance to Pareto frontier        }
        \label{alg:pareto-distance}
         \textit{Input}: Point on objective plane \(p\), and Pareto Set \(P\)
         
        \textit{Output}: Distance to Pareto frontier (defined to be zero if Pareto optimal)
        \begin{algorithmic}[1]
            \Procedure{pareto\_distance}{$p, P$}
                \If{\(\textsc{is\_pareto}(p, P)\)} 
                    \Return{0} \Comment{Distance defined to be zero if point is Pareto optimal}
                \EndIf

                \State \(i = \textsc{find\_index}(p.x, P)\)
                \State \(d = P\mathrm{[i].y} - p\mathrm{.y}\) \Comment{Check top boundary}

                \While{\(P\mathrm{[i].x} - p\mathrm{.x} < d\)}
                    \If{\(i + 1 < \textsc{size}(P)\) \textbf{ and } \(P\mathrm{[i].y} > p\mathrm{.y}\)} 
                        \State \(q = \textsc{point}(x = P\mathrm{[i].x}, y = P\mathrm{[i+1].y})\)
                        \State \(d = \textsc{minimum}(\textsc{distance}(p, q), d)\) \Comment{Check corners}
                    \Else
                        \State \(d = \textsc{minimum}(P\mathrm{[i].x} - p\mathrm{.x}, d)\) \Comment{Check right boundary}
                    \EndIf
                \EndWhile

                \Return{\(d\)}
            \EndProcedure
        \end{algorithmic}
    \end{algorithm}


\section{The Symmetric Pareto Mapper}
    \label{app:symmetric-mapper}
    In this appendix, we consider one way that Algorithm~\ref{alg:mapper} can be modified to accommodate an additional structure in the dataset.
    The full pseudocode is provided in Algorithm~\ref{alg:symmetric-mapper} with the key difference occurring on line 12.
    This modification amounts to a redefining of the compressed variable \(Z = (f(X_1), f(X_2))\).
    We would like to discover an encoding \(f\) that trades off the entropy of the encoding with the ability to predict \(Y\) from \((f(X_1), f(X_2))\).
    This corresponds to the following graphical model:
    \[
    \begin{tikzcd}
    Z_1 & X_1 \arrow[l, "f"'] \arrow[r] & Y \\
    Z_2 & X_2 \arrow[l, "f"] \arrow[ru] &  
    \end{tikzcd}
    \]

    \begin{figure}[H]
   
\centering 
  {\captionsetup{position=bottom,justification=centering}
        \begin{subfigure}[b]{\textwidth}
          \centering
            \includegraphics[width=10cm]{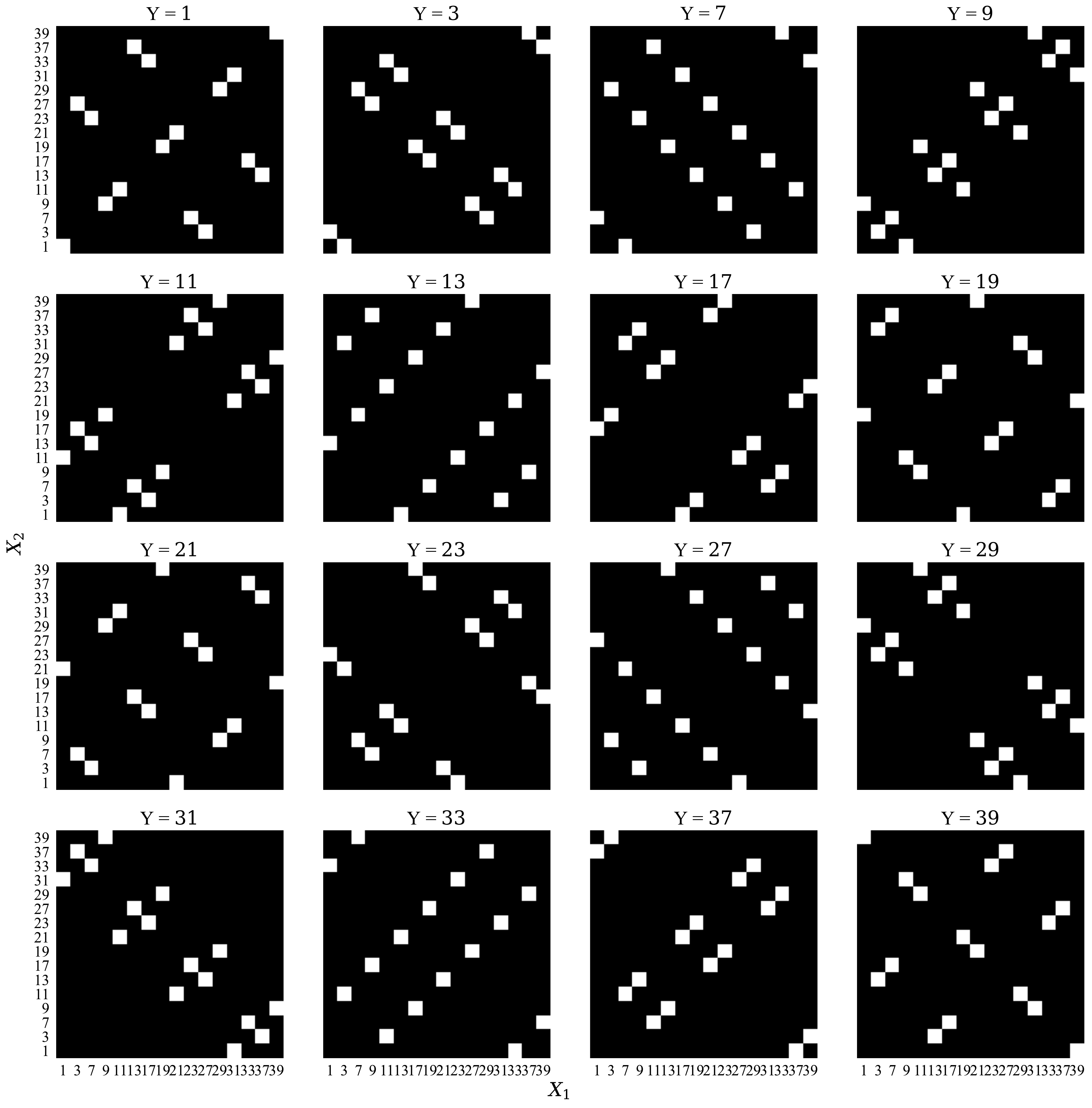}
            \caption{}
        \end{subfigure}%
        ~ 
        \\
        \begin{subfigure}[b]{\textwidth}
           \centering
            \includegraphics[width=10cm]{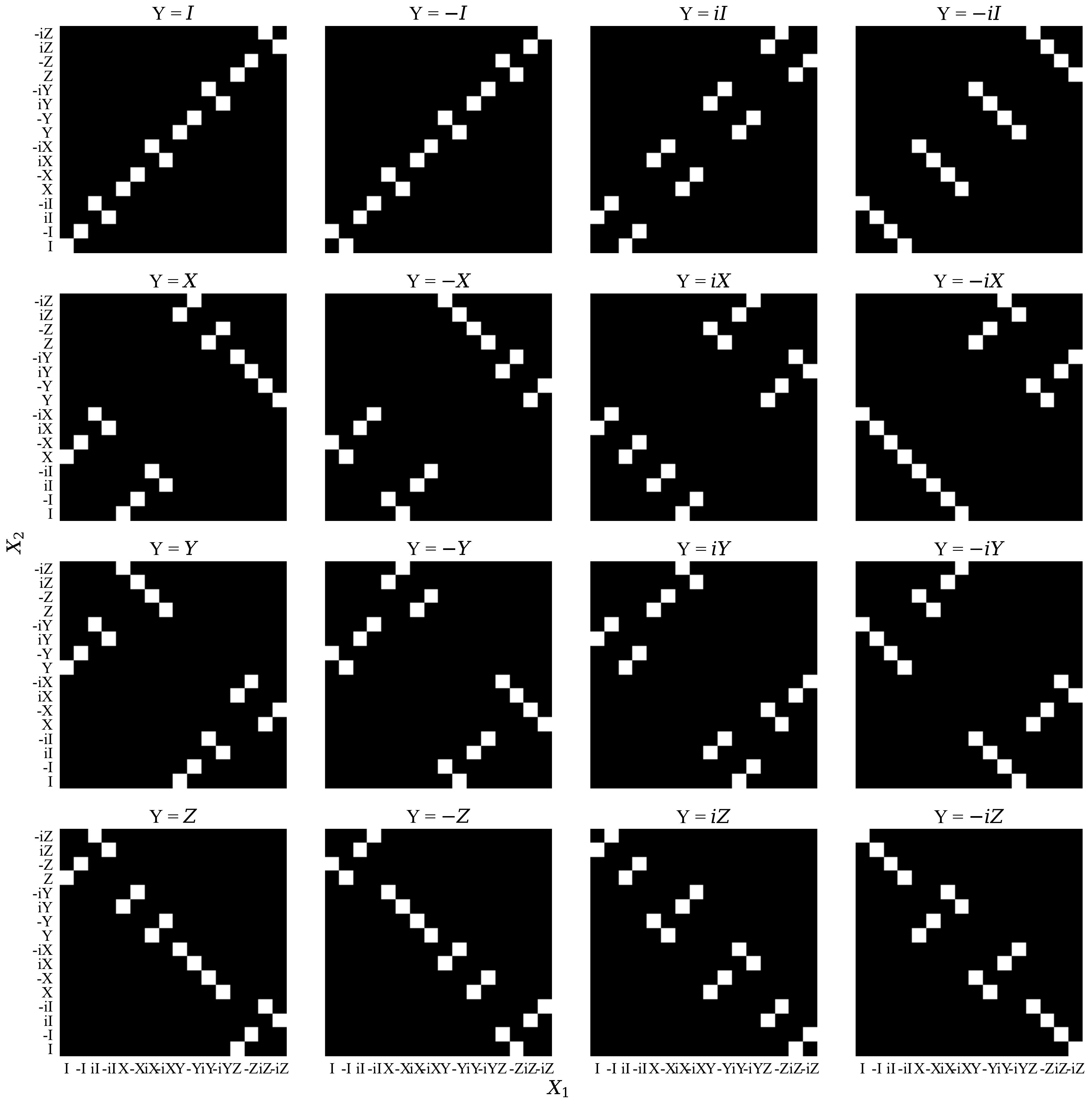}
            \caption{}
        \end{subfigure}}
        \caption{Joint distribution \(p_{X_1, X_2; Y}\) for the (\textbf{a}) \(\left(\mathbb{Z}/40\mathbb{Z}\right)^\times\) group and (\textbf{b}) the Pauli group.}
        \label{fig:group-distribution}
    \end{figure} 

    \begin{algorithm}[H]
    \caption{Symmetric Pareto Mapper   }
    \label{alg:symmetric-mapper}
      \textit{Input}: Joint distribution \(A, B, C \sim p_{ABC}\), and search parameter $\varepsilon$
      
    \textit{Output}: Approximate Pareto frontier \(P\)
    \begin{algorithmic}[1]

    \Procedure{symmetric\_pareto\_mapper}{$p_{ABC}, \varepsilon$}
        \State \textbf{Pareto set} \(P = \emptyset\) \Comment{Initialize maintained Pareto set}
        \State \textbf{Queue} \(Q = \emptyset\) \Comment{Initialize search queue}

        \State \textbf{Point} \(p = (\mathrm{x=} -\operatorname{H}(p_{X_1 X_2}) / 2, \mathrm{y=} \operatorname{I}(p_{X_1 X_2 ; Y}), \mathrm{f=} \operatorname{id})\) \Comment{Evaluate trivial clustering}
        \State \(P \leftarrow  \textsc{insert}(p, P)\) 

        \State \(Q \leftarrow  \textsc{enqueue}(\operatorname{id}, Q) \) \Comment{Start with identity clustering \(\operatorname{id}: [n] \rightarrow [n]\) where \(n = |X|\)}
        
        \While{\(Q\) is not \(\emptyset\)}
            \State \(f = \textsc{dequeue}(Q)\) 
            \State \(n = |\operatorname{range}(f)|\)
            \For{ \(0 < i < j < n \) } \Comment{Loop over all pairs of output clusters of \(f\)}
                \State \(f' = c_{i,j} \circ f\) \Comment{Merge clusters \(i, j\) output \(f\)}
                \State \textbf{Point} \(p = \mathrm{Point}(\mathrm{x=} -\operatorname{H}(p_{f'(X_1) f'(X_2)}) / 2, \mathrm{y=} \operatorname{I}(p_{f'(X_1) f'(X_2); Y}), \mathrm{f=} f')\) 
                \State \(d = \textsc{pareto\_distance}(p, P)\)

                \State \(P \leftarrow \textsc{pareto\_add}(p, P)\) \Comment{Keep track of point and clustering in Pareto set}

                \State{\textbf{with} probability \(e^{-d/\varepsilon}\),  \(Q \leftarrow \textsc{enqueue}(f', Q)\)} \\
            \EndFor
        \EndWhile 
        \Return{\(P\)}
    \EndProcedure

    \end{algorithmic}
    \end{algorithm}

\end{document}